\definecolor{clemson-orange}{RGB}{234,106,32}
\definecolor{chicago-maroon}{RGB}{128,0,0}
\definecolor{cincinnati-red}{RGB}{190,0,0}
\definecolor{soft-cyan}{RGB}{68,85,90}
\newcommand{\removed}[1]{} 
\newcommand{\amitabh}[1]{{\color{cincinnati-red}{#1}}} 
\def\@xobeysp{ }
\newcommand{\bb}{\mathbb}
\newcommand{\R}{\bb R}
\newcommand{\N}{{\bb N}}
\theoremstyle{definition}
\newtheorem{theorem}{Theorem}[section]
\newtheorem{lemma}[theorem]{Lemma}
\newtheorem{corollary}[theorem]{Corollary}
\newtheorem{prop}[theorem]{Proposition}
\newtheorem{definition}{Definition}
\DeclareMathOperator*{\argmin}{argmin}
\DeclareMathOperator*{\verts}{vert}
\def\ve#1{\mathchoice{\mbox{\boldmath$\displaystyle\bf#1$}}
{\mbox{\boldmath$\textstyle\bf#1$}}
{\mbox{\boldmath$\scriptstyle\bf#1$}}
{\mbox{\boldmath$\scriptscriptstyle\bf#1$}}}
\newcommand{\x}{{\ve x}}
\renewcommand{\a}{{\ve a}}
\renewcommand{\c}{{\ve c}}
\renewcommand{\b}{{\ve b}}
\newcommand{\rr}{{\ve r}}
\newcommand{\cA}{\mathcal{A}}
\newcommand{\cL}{\mathcal{L}}
\newcommand{\cD}{\mathcal{D}}
\newcommand{\cF}{\mathcal{F}}
\newcommand{\cS}{\mathcal{S}}
\newcommand{\cP}{\mathcal{P}}
\newcommand{\minimize}[3]{
\begin{aligned}
& \underset{#1}{\textrm{minimize:}}
& & #2 \\
& \textrm{subject to:}
& &  #3
\end{aligned}
}
\newcommand{\s}{\textrm{s}}
\newcommand{\F}{\mathcal{H}}
\newcommand{\pwl}{\textrm{PWL}}
\newcommand{\poly}{\textrm{poly}}
\newcommand{\comp}{\textrm{comp}}
\newif\ifsolutions \solutionstrue
\numberwithin{equation}{section}
\title{Understanding Deep Neural Networks with Rectified Linear Units}
\author{Raman Arora\thanks{Department of Computer Science, Email: \tt{arora@cs.jhu.edu}}\ \ \ \ 
Amitabh Basu\thanks{Department of Applied Mathematics and Statistics, Email: \tt{basu.amitabh@jhu.edu}}\ \ \ \ 
Poorya Mianjy\thanks{Department of Computer Science, Email: \tt{mianjy@jhu.edu}}\ \ \ \ 
Anirbit Mukherjee\thanks{Department of Applied Mathematics and Statistics, Email: \tt{amukhe14@jhu.edu}} \\ Johns Hopkins University }
\begin{document}

\maketitle

\begin{abstract}
In this paper we investigate the family of functions representable by deep neural networks (DNN) with rectified linear units (ReLU). We give an algorithm to train a ReLU DNN with one hidden layer to {\em global optimality} with runtime polynomial in the data size albeit exponential in the input dimension. Further, we improve on the known lower bounds on size (from exponential to super exponential) for approximating a ReLU deep net function by a shallower ReLU net. Our gap theorems hold for smoothly parametrized families of ``hard'' functions, contrary to countable, discrete families known in the literature.  An example consequence of our gap theorems is the following: for every natural number $k$ there exists a function representable by a ReLU DNN with $k^2$ hidden layers and total size $k^3$, such that any ReLU DNN with at most $k$ hidden layers will require at least $\frac12k^{k+1}-1$ total nodes. Finally, for the family of $\R^n\to \R$ DNNs with ReLU activations, we show a new lowerbound on the number of affine pieces, which is larger than previous constructions in certain regimes of the network architecture and most distinctively our lowerbound is demonstrated by an explicit construction of a \emph{smoothly parameterized} family of functions attaining this scaling. Our construction utilizes the theory of zonotopes from polyhedral theory.
\end{abstract}

\section {Introduction} 

Deep neural networks (DNNs) provide an excellent family of hypotheses for machine learning tasks such as classification. Neural networks with a single hidden layer of finite size can represent any continuous function on a compact subset of $\R^n$ arbitrary well. The universal approximation result was first given by Cybenko in 1989 for sigmoidal activation function~\citep{cybenko1989approximation}, and later generalized by Hornik to an arbitrary bounded and nonconstant activation function~\cite{hornik1991approximation}. 
Furthermore, neural networks have finite VC dimension (depending polynomially on the number of edges in the network), 
and therefore, are PAC (probably approximately correct) learnable using a sample of size that is polynomial in the size of the networks~\cite{anthony2009neural}. 
However, neural networks based methods were shown to be computationally hard to learn~\citep{anthony2009neural} and had mixed empirical success. 
Consequently, DNNs fell out of favor by late 90s. 

Recently, there has been a resurgence of DNNs with the advent of deep learning~\cite{lecun2015deep}. Deep learning, loosely speaking, refers to a suite of computational techniques that have been developed recently for training DNNs. It started with the work of~\cite{hinton2006fast}, which gave empirical evidence that if DNNs are initialized properly (for instance, using unsupervised pre-training), then we can find good solutions in a reasonable amount of runtime. 
This work was soon followed by a series of early successes of deep learning at significantly improving the state-of-the-art in speech recognition~\cite{hinton2012deep}. Since then, deep learning has received immense attention from the machine learning community with several state-of-the-art AI systems in speech recognition, image classification, and natural language processing based on deep neural nets~\cite{hinton2012deep,dahl2013improving,krizhevsky2012imagenet,le2013building,sutskever2014sequence}. While there is less of evidence now that pre-training actually helps, several other solutions have since been put forth to address the issue of efficiently training DNNs. These include heuristics such as dropouts~\cite{srivastava2014dropout}, but also considering alternate deep architectures such as convolutional neural networks~\cite{sermanet2014overfeat}, deep belief networks~\cite{hinton2006fast}, and deep Boltzmann machines~\cite{salakhutdinov2009deep}. In addition, deep architectures based on new non-saturating activation functions have been suggested to be more effectively trainable -- the most successful and widely popular of these is the rectified linear unit (ReLU) activation, i.e., $\sigma(x) = \max\{0,x\}$, which is the focus of study in this paper. 

In this paper, we formally study deep neural networks with rectified linear units; we refer to these deep architectures as ReLU DNNs. Our work is inspired by these recent attempts to understand the reason behind the successes of deep learning, both in terms of the structure of the functions represented by DNNs, \cite{T1,T2,KW,OS}, as well as efforts which have tried to understand the non-convex nature of the training problem of DNNs better \cite{K,RH}.
Our investigation of the function space represented by ReLU DNNs also takes inspiration from the classical theory of circuit complexity; we refer the reader to \cite{AB,Shp,Jun,Sap,All} for various surveys of this deep and fascinating field. 
In particular, our gap results are inspired by results like the ones by Hastad~\cite{hastad1986almost}, Razborov~\cite{razborov1987lower} and Smolensky~\cite{smolensky1987algebraic} which show a strict separation of complexity classes. We make progress towards similar statements with deep neural nets with ReLU activation. 

\subsection{Notation and Definitions}

We extend the ReLU activation function to vectors $x \in \R^n$ through entry-wise operation: $\sigma(x) = (\max\{0,x_1\}, \max\{0,x_2\}, \ldots, \max\{0,x_n\})$. For any $(m,n)\in \N$, let $\cA_m^n$ and $\cL_m^n$ denote the class of affine and linear transformations from $\R^m\to \R^n$, respectively.


\begin{definition}\label{def:relu-dnn}[ReLU DNNs, depth, width, size] For any {\em number of hidden layers} $k\in \N$,  {\em input} and {\em output dimensions} $w_0, w_{k+1} \in \N$, a $\R ^{w_0} \to \R^{w_{k+1}}$  ReLU DNN is given by specifying a sequence of $k$ natural numbers $w_1, w_2, \ldots, w_k$ representing {\em{widths}} of the hidden layers, a set of $k$ affine transformations $T_i : \R^{w_{i-1}} \to \R^{w_i}$ for $i=1, \ldots, k$ and a linear transformation $T_{k+1} : \R^{w_k} \to \R^{w_{k+1}}$ corresponding to {\em weights} of the hidden layers. Such a ReLU DNN is called a $(k+1)$-layer ReLU DNN, and is said to have $k$ hidden layers. The function $f :\R^{n_1} \to \R^{n_2}$ computed or represented by this ReLU DNN is \begin{equation}\label{eq:DNN-def}f = T_{k+1}\circ\sigma\circ T_k\circ \cdots \circ T_2 \circ\sigma\circ T_1,\end{equation} where $\circ$ denotes function composition. The {\em depth} of a \text{ReLU DNN} is defined as $k +1$. 
The {\em width} of a ReLU DNN is $\max\{w_1, \ldots, w_k\}$. The {\em size} of the ReLU DNN is $w_1 + w_2 + \ldots + w_k$. 
\end{definition}
\begin{definition}
We denote the class of $\R^{w_0}\to \R^{w_{k+1}}$ ReLU DNNs with $k$ hidden layers of widths $\{ w_i \}_{i=1}^{k}$ by $\cF_{\{ w_i \}_{i=0}^{k+1}}$, i.e.
\begin{equation}
\cF_{\{ w_i \}_{i=0}^{k+1}} := \{ T_{k+1}\circ\sigma\circ T_k\circ \cdots \circ\sigma\circ T_1: \ T_i \in \cA_{w_{i-1}}^{w_i} \forall i\in\{1,\ldots,k\}, \quad T_{k+1} \in \cL_{w_k}^{w_{k+1}} \}
\end{equation}
\end{definition}

\begin{definition}\label{def:PWL}[Piecewise linear functions] We say a function $f\colon \R^n\to \R$ is {\em continuous piecewise linear (PWL)} if there exists a {\em finite} set of polyhedra whose union is $\R^n$, and $f$ is affine linear over each polyhedron (note that the definition automatically implies continuity of the function because the affine regions are closed and cover $\R^n$, and affine functions are continuous). 
The {\em number of pieces of $f$} is the number of maximal connected subsets of $\R^n$ over which $f$ is affine linear (which is finite).
\end{definition}
Many of our important statements will be phrased in terms of the following simplex.

\begin{definition} Let $M > 0$ be any positive real number and $p \geq 1$ be any natural number. Define the following set: 
$$\Delta^p_{M} := \{\x \in \R^p: 0 < \x_1 < \x_2 < \ldots < \x_p < M\}.$$
\end{definition}

\section {Exact characterization of function class represented by ReLU DNNs}\label{sec:exact-character}

One of the main advantages of DNNs is that they can represent a large family of functions with a relatively small number of parameters. 
In this section, we give an exact characterization of the functions representable by ReLU DNNs. Moreover, we show how structural properties of ReLU DNNs, specifically their depth and width, affects their expressive power. It is clear from definition that any function from $\R^n \to \R$ represented by a ReLU DNN is a continuous piecewise linear (PWL) function. In what follows, we show that the converse is also true, that is any PWL function is representable by a ReLU DNN. In particular, the following theorem establishes a one-to-one correspondence between the class of ReLU DNNs and PWL functions. 
\begin{theorem}\label{cor:all-pwl-are-dnn} Every $\R^n \to \R$ ReLU DNN represents a piecewise linear function, and every piecewise linear function $\R^n \to \R$ can be represented by a ReLU DNN with at most $\lceil \log_2(n+1) \rceil + 1$ depth.
\end{theorem}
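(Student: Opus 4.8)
The statement has two halves. The forward implication---that every $\R^n\to\R$ ReLU DNN computes a PWL function---I would dispatch by induction on the number of hidden layers $k$. Each affine map $T_i$ is globally affine hence PWL, the activation $\sigma$ is PWL coordinatewise, and the class of PWL functions is closed under composition and under finite linear combinations; since the network is a finite composition of such maps, the output is PWL, and the induced subdivision of $\R^n$ is a common refinement of finitely many hyperplane arrangements, so the number of pieces is finite. This direction is routine and carries no real content.

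The substance is the converse, and my plan is to decouple it into (i) a combinatorial normal form for PWL functions and (ii) a network-construction step. For (i) I would invoke the known lattice representation of continuous PWL functions: every PWL $f:\R^n\to\R$ can be written as a signed sum of maxima of affine functions,
\[
f(x) \;=\; \sum_{j=1}^{p} s_j \,\max_{i\in S_j}\ell_i(x),
\qquad s_j\in\{-1,+1\},\ \ell_i \text{ affine},
\]
where crucially each index set satisfies $|S_j|\le n+1$. The dimension-dependent bound $|S_j|\le n+1$ is exactly the feature that produces $\lceil\log_2(n+1)\rceil$ rather than a bound growing with the number of pieces; it reflects that in $\R^n$ a PWL function is locally pinned down by at most $n+1$ of its affine pieces.

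For (ii) I would first record the elementary building block that a maximum of two affine functions is computable by a single hidden layer, using $|z|=\sigma(z)+\sigma(-z)$ and $\max\{a,b\}=\tfrac12(a+b)+\tfrac12|a-b|$; the output is an honest linear combination of ReLU outputs, so it can feed the affine part of the next layer. Iterating this over a balanced binary tree computes $\max$ of $m$ affine functions in $\lceil\log_2 m\rceil$ hidden layers (and $\min$ via $\min=-\max(-\cdot)$). Applying this to each term $\max_{i\in S_j}\ell_i$ with $m=|S_j|\le n+1$ costs at most $\lceil\log_2(n+1)\rceil$ hidden layers. Finally I would run all $p$ such max-trees in parallel---padding the shallower ones to equal depth by identity layers, realized as $x=\sigma(x)-\sigma(-x)$---and fold the outer signed sum $\sum_j s_j(\cdots)$ into the terminal linear map $T_{k+1}$, which adds no depth. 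This yields $\lceil\log_2(n+1)\rceil$ hidden layers, i.e.\ depth $\lceil\log_2(n+1)\rceil+1$, as claimed.

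The main obstacle is ingredient (i): the normal form with the tight group-size bound $|S_j|\le n+1$. Everything in (ii) is bookkeeping---the only care needed there is matching the depths of the parallel subnetworks and checking that intermediate outputs are linear so that composition is legal---whereas the $n+1$ bound is what actually drives the theorem, and is the step I would expect to lean on an external structural result (hinging-hyperplane / lattice-polynomial theory) rather than reprove from scratch. A secondary point worth verifying is the base dimension behavior (e.g.\ $n=1$, where the bound reads depth $\le \lceil\log_2 2\rceil+1=2$) to confirm that the indexing ``depth $=$ hidden layers $+\,1$'' matches Definition~\ref{def:relu-dnn}.
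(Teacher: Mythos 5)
Your proposal is correct and follows essentially the same route as the paper: the forward direction by closure of PWL functions under composition, and the converse via the Wang--Sun hinging-hyperplane decomposition $f=\sum_j s_j \max_{i\in S_j}\ell_i$ with $|S_j|\le n+1$ (the paper's Equation~(\ref{eq:hinged-hyperplane}), cited from~\citep{wang2005generalization}), combined with the $\max\{x,y\}=\frac{x+y}{2}+\frac{|x-y|}{2}$ gadget iterated in a balanced tree of depth $\lceil\log_2(n+1)\rceil$ (the paper's Lemmas~\ref{lem:comp-DNN}, \ref{lem:add-DNN} and~\ref{lem:max-DNN}). Your padding of shallower subnetworks by identity layers via $x=\sigma(x)-\sigma(-x)$ is exactly the paper's Lemma~\ref{lem:affine-DNN}, so there is no substantive difference.
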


{\bf{Proof Sketch: }}
It is clear that any function represented by a ReLU DNN is a PWL function. To see the converse, we first note that any PWL function can be represented as a linear combination of piecewise linear convex functions. 
More formally, by Theorem~1 in~\citep{wang2005generalization},
for every piecewise linear function $f :\R^n \to \R$, there exists a finite set of affine linear functions $\ell_1, \ldots, \ell_k$ and subsets $S_1, \ldots, S_p \subseteq \{1, \ldots, k\}$ (not necessarily disjoint) where each $S_i$ is of cardinality at most $n+1$, such that \begin{equation}\label{eq:hinged-hyperplane}f = \sum_{j = 1}^p s_j \bigg(\max_{i \in S_j} \ell_i\bigg),\end{equation} where $s_j \in \{-1,+1\}$ for all $j=1, \ldots, p$. 
Since a function of the form $\max_{i \in S_j} \ell_i$ is a piecewise linear convex function with at most $n+1$ pieces (because $|S_j| \leq n+1$), Equation~(\ref{eq:hinged-hyperplane}) says that any continuous piecewise linear function (not necessarily convex) can be obtained as a linear combination of piecewise linear convex functions each of which has at most $n+1$ affine pieces.  Furthermore, Lemmas~\ref{lem:comp-DNN}, \ref{lem:add-DNN} and~\ref{lem:max-DNN} in the Appendix (see supplementary material), show that composition, addition, and pointwise maximum of PWL functions are also representable by ReLU DNNs. In particular, in Lemma~\ref{lem:max-DNN} we note that $\max\{ x,y \}=\frac{x+y}{2}+\frac{|x-y|}{2}$ is implementable by a two layer ReLU network and use this construction in an inductive manner to show that maximum of $n+1$ numbers can be computed using a ReLU DNN with depth at most $\lceil \log_2(n+1) \rceil$. 

While Theorem~\ref{cor:all-pwl-are-dnn} gives an upper bound on the depth of the networks needed to represent all continuous piecewise linear functions on $\R^n$, it does not give any tight bounds on the {\em size} of the networks that are needed to represent a given piecewise linear function. For $n =1$, we give tight bounds on size as follows:

\begin{theorem}\label{thm:1-dim-2-layer} Given any piecewise linear function $\R \to \R$ with $p$ pieces there exists a 2-layer DNN with at most $p$ nodes that can represent $f$. Moreover, any 2-layer DNN that represents $f$ has size at least $p-1$. 
\end{theorem}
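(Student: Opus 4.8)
The plan is to read both bounds off the common description shared by a one–hidden–layer network and a univariate PWL function. A $2$-layer ReLU DNN of width $w$ computes $f(x)=\sum_{i=1}^{w} a_i\,\sigma(b_i x + c_i)$ (plus the constant supplied by the output map), so its output is a linear combination of $w$ elementary ReLU functions, each affine on two half-lines with a single breakpoint at $-c_i/b_i$ when $b_i\neq 0$. On the other side, a PWL function $f\colon\R\to\R$ with $p$ pieces is determined by its $p-1$ breakpoints $x_1<\cdots<x_{p-1}$, the $p$ slopes $s_0,\ldots,s_{p-1}$ of its pieces, and its value at one point. Matching these two descriptions gives the upper bound, and counting the kinks of the elementary functions gives the lower bound.

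For the upper bound I would give an explicit construction. I would place one right-facing unit $\sigma(x-x_j)$ at each breakpoint, $j=1,\ldots,p-1$, together with one left-facing unit $\sigma(x_1-x)$, for $p$ hidden units in all. Each right-facing unit is inactive to the left of its breakpoint, so I would use the left-facing unit (with coefficient $-s_0$) to install the slope $s_0$ on the leftmost unbounded piece, and choose the coefficients of the right-facing units so that the slope accumulated on the piece between $x_i$ and $x_{i+1}$ equals $s_i$ --- a triangular linear system with a unique solution. Continuity at each breakpoint is automatic since every $\sigma$ vanishes at its own kink, so the last remaining degree of freedom, the value of $f$ at $x_1$, can be matched using the constant term of the output map. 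A one-line check that the realised slopes equal $s_0,\ldots,s_{p-1}$ on the respective pieces completes this direction.

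For the lower bound I would argue by counting points of non-differentiability. Since $f$ has $p$ pieces it is non-differentiable at exactly $p-1$ distinct points. Writing $f=\sum_{i=1}^{w} a_i\sigma(b_i x + c_i)$, away from the finitely many kink locations each summand is affine, so a finite sum can fail to be differentiable at a point only if at least one summand does; but $a_i\sigma(b_i x + c_i)$ is non-differentiable only at the single location $-c_i/b_i$. Hence every breakpoint of $f$ must coincide with the kink of some hidden unit, and because a unit contributes at most one kink location while the $p-1$ breakpoints are distinct, at least $p-1$ hidden units are needed, i.e.\ the size is at least $p-1$.

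The routine parts are the slope/coefficient bookkeeping in the construction and the elementary fact that a sum of functions differentiable at a point is differentiable there. The step that needs care is the accounting of the two unbounded pieces in the upper bound: naively charging $p-1$ slope changes, the leftmost slope, and the overall value seems to want more than $p$ units, and the gap between the $p$ of the upper bound and the $p-1$ of the lower bound is precisely an artifact of how these unbounded rays (and the output constant) are charged. I expect the main obstacle to be organising the construction so that all of these boundary effects together cost only a single extra unit, yielding the clean bound of $p$.
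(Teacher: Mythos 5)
Your proposal is correct and is essentially the paper's own argument: the paper likewise writes $f$ as a sum of $p$ single-kink ReLU ``flaps'' anchored at the breakpoints (with one flap of the opposite orientation handling an unbounded ray), solves the resulting triangular linear system for the slopes by back substitution, absorbs the remaining constant into the output bias, and gets the lower bound of $p-1$ from the fact that each hidden unit contributes at most one breakpoint, so a width-$w$ two-layer net has at most $w+1$ pieces. The only differences are cosmetic --- you use $p-1$ right-facing units plus one left-facing unit with explicit slope-increment coefficients, whereas the paper uses $p-1$ left-flaps plus one right-flap --- and the ``boundary effect'' you worried about is already resolved by your own construction, exactly as in the paper.
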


Finally, the main result of this section follows from Theorem~\ref{cor:all-pwl-are-dnn}, and well-known facts that the piecewise linear functions are dense in the family of compactly supported continuous functions and the family of compactly supported continuous functions are dense in $L^q(\R^n)$~\citep{royden}). 
Recall that $L^q(\R^n)$ is the space of Lebesgue integrable functions $f$ such that $\int |f|^q d\mu < \infty$, where $\mu$ is the Lebesgue measure on $\R^n$ (see Royden~\cite{royden}).

\begin{theorem}\label{thm:Lpapprox}
Every function in $L^q(\R^n),\ (1 \leq q \leq \infty)$ can be arbitrarily well-approximated in the $L^q$ norm (which for a function $f$ is given by $\vert \vert f \vert \vert _q = (\int \vert f \vert^q)^{1/q}$) by a ReLU DNN function with at most $\lceil \log_2(n+1)\rceil$ hidden layers. Moreover, for $n=1$, any such $L^q$ function can be arbitrarily well-approximated by a 2-layer DNN, with tight bounds on the size of such a DNN in terms of the approximation.
\end{theorem}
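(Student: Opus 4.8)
The plan is to build the approximation through a two-stage density chain and then invoke the exact representation results already in hand, which do all the real work. Fix $f \in L^q(\R^n)$ and $\epsilon > 0$. First I would use the classical fact (Royden~\cite{royden}) that compactly supported continuous functions are dense in $L^q(\R^n)$ for $1 \le q < \infty$ to produce a continuous $g$ with compact support $K$ satisfying $\|f - g\|_q < \epsilon/2$; for $q = \infty$ one works directly in the uniform norm on compact sets rather than through an integral.

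Second, I would approximate $g$ by a \emph{compactly supported} continuous PWL function. Since $g$ is uniformly continuous on the compact set $K$, a sufficiently fine simplicial triangulation of a box containing $K$ yields a piecewise linear interpolant $h$ with $\|g - h\|_\infty < \delta$, and $h$ can be arranged to vanish outside a bounded region because $g$ already vanishes near $\partial K$. Converting the uniform bound into an $L^q$ bound is immediate: as $g - h$ is supported on a set of finite measure $V$, we have $\|g - h\|_q \le \delta\, V^{1/q}$ for $q < \infty$ (and $\|g-h\|_\infty = \delta$ when $q = \infty$), so choosing $\delta$ small gives $\|g - h\|_q < \epsilon/2$, whence $\|f - h\|_q < \epsilon$ by the triangle inequality.

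Third, the function $h$ is a continuous PWL function $\R^n \to \R$, so Theorem~\ref{cor:all-pwl-are-dnn} represents it exactly by a ReLU DNN of depth at most $\lceil \log_2(n+1)\rceil + 1$, i.e. with at most $\lceil \log_2(n+1)\rceil$ hidden layers, proving the first assertion. For the refined statement when $n = 1$, I would additionally track the number of affine pieces: achieving accuracy $\epsilon$ forces the PWL approximant $h$ to have some number of pieces $p = p(\epsilon)$, and Theorem~\ref{thm:1-dim-2-layer} then represents $h$ by a 2-layer DNN with at most $p$ nodes while simultaneously guaranteeing that \emph{any} 2-layer DNN computing $h$ has size at least $p-1$. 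These matching upper and lower bounds are precisely the claimed tight bounds on the size of the approximating DNN in terms of the approximation.

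I expect the main obstacle to be the bookkeeping in the second step: ensuring the interpolant $h$ is genuinely compactly supported and continuous so that its $L^q$ norm is finite and Theorem~\ref{cor:all-pwl-are-dnn} applies, and cleanly handling the $q = \infty$ case, where the conversion $\delta\, V^{1/q}$ degenerates and one must argue directly in the uniform norm on the support. Once the density chain is laid out carefully, the depth and size conclusions follow mechanically from the earlier representation theorems.
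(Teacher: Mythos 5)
Your proposal is correct and follows essentially the same route as the paper: the paper's proof is a one-line appeal to the density of continuous PWL functions in $L^q(\R^n)$ combined with the exact representation result of Theorem~\ref{cor:all-pwl-are-dnn} (and Theorem~\ref{thm:1-dim-2-layer} for the tight size bounds when $n=1$), and your two-stage density chain merely fills in the standard details of that density fact. Your caveat about $q=\infty$ is a fair one, but it applies equally to the paper's own argument, so it is not a divergence in approach.
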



Proofs of Theorems~\ref{thm:1-dim-2-layer} and~\ref{thm:Lpapprox} are provided in Appendix~\ref{sec:thm-1-rep}. We would like to remark that a weaker version of Theorem~\ref{cor:all-pwl-are-dnn} was observed in~\cite[Proposition 4.1]{goodfellow2013maxout} (with no bound on the depth), along with a universal approximation theorem~\cite[Theorem 4.3]{goodfellow2013maxout} similar to Theorem~\ref{thm:Lpapprox}. The authors of~\cite{goodfellow2013maxout} also used a previous result of Wang~\citep{wang2004general} for obtaining their result. 
In a subsequent work Boris Hanin \citep{hanin2017universal} has, among other things, found a width and depth upper bound  for ReLU net representation of positive PWL functions on $[0,1]^n$. The width upperbound is n+3 for general positive PWL functions and $n+1$ for convex positive PWL functions. For convex positive PWL functions his depth upper bound is sharp if we disallow dead ReLUs.




\section{Benefits of Depth}
Success of deep learning has been largely attributed to the depth of the networks, i.e. number of successive affine transformations followed by nonlinearities, which is shown to be extracting hierarchical features from the data. In contrast, traditional machine learning frameworks including support vector machines, generalized linear models, and kernel machines can be seen as instances of shallow networks, where a linear transformation acts on a single layer of nonlinear feature extraction. In this section, we explore the importance of depth in ReLU DNNs. In particular, in Section~\ref{sec:gap-results}, we provide a smoothly parametrized family of $\R\to\R$ ``hard'' functions representable by ReLU DNNs, which requires exponentially larger size for a shallower network. Furthermore, in Section~\ref{sec:zon}, we construct a continuum of $\R^n\to\R$ ``hard'' functions representable by ReLU DNNs, which to the best of our knowledge is the first explicit construction of ReLU DNN functions whose number of affine pieces grows exponentially with input dimension. The proofs of the theorems in this section are provided in Appendix~\ref{sec:benefits}. 

\subsection{Circuit lower bounds for $\R \rightarrow \R$ ReLU DNNs}\label{sec:gap-results}

In this section, we are only concerned about $\R\to\R$ ReLU DNNs, i.e. both input and output dimensions are equal to one. The following theorem shows the depth-size trade-off in this setting. 

\begin{theorem}\label{thm:high-to-low} For every pair of natural numbers $k \geq 1$, $w \geq 2$, there exists a family of hard functions representable by a $\R \to \R$ $(k+1)$-layer ReLU DNN of width $w$ such that if it is also representable by a $(k'+1)$-layer ReLU DNN for any $k' \leq k$, then this $(k'+1)$-layer ReLU DNN has size at least $\frac12k'w^{\frac{k}{k'}}-1$.
\end{theorem}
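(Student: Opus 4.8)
The plan is to reduce the statement to a counting argument on the number of affine pieces of a $\R\to\R$ PWL function: I would combine a lower bound on the number of pieces of an explicit ``hard'' family with a matching upper bound on the number of pieces producible by a network of prescribed depth and width profile, and then optimize over admissible profiles via the AM--GM inequality.

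First I would build the hard family. Fix a single ``sawtooth'' $s\colon[0,1]\to[0,1]$: a continuous PWL function with exactly $w$ linear pieces, each piece monotone and mapping its subinterval \emph{onto} all of $[0,1]$ (alternating increasing and decreasing). By Theorem~\ref{thm:1-dim-2-layer}, $s$ is representable by a $2$-layer ReLU DNN of width $w$, say $s=T_2\circ\sigma\circ T_1$ with $T_1\in\cA_1^w$ and $T_2\in\cL_w^1$. The hard function is the $k$-fold composition $f=s^{\circ k}$. Using the composition Lemma~\ref{lem:comp-DNN} (concretely $s\circ s=T_2\circ\sigma\circ(T_1\circ T_2)\circ\sigma\circ T_1$, where $T_1\circ T_2$ is affine), $f$ is representable by a $(k+1)$-layer ReLU DNN of width $w$. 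The surjectivity of each tooth makes composition piece-multiplicative: if $g$ has $N$ pieces each mapping onto $[0,1]$, then $s\circ g$ has $wN$ such pieces, so by induction $f=s^{\circ k}$ has exactly $w^k$ affine pieces. A \emph{smoothly parameterized} continuum of hard functions is then obtained by letting the breakpoints and heights of $s$ vary continuously while keeping each tooth surjective; e.g. parameterizing the $w-1$ interior breakpoints of $s$ by a point of $\Delta^{w-1}_{1}$, so that every member still has $w^k$ pieces.

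Next I would prove the piece upper bound: a $\R\to\R$ ReLU DNN with $k'$ hidden layers of widths $w'_1,\dots,w'_{k'}$ computes a PWL function with at most $\prod_{i=1}^{k'}(w'_i+1)$ pieces. The argument is an induction on layers that tracks the partition of $\R$ into intervals on which the running vector-valued map is affine. Starting from the single interval $\R$, suppose that before layer $i$ the domain is split into $P$ affine intervals; the affine map $T_i$ preserves this partition, and applying $\sigma$ coordinate-wise introduces at most one new breakpoint per coordinate on each interval, hence at most $w'_i$ new breakpoints per interval, multiplying the count by at most $(w'_i+1)$. The base case $k'=1$ is exactly the $\le w'_1+1$ bound implied by Theorem~\ref{thm:1-dim-2-layer}, and the final linear map $T_{k'+1}$ adds no breakpoints, giving the product. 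Combining the two bounds, any $(k'+1)$-layer representation of $f$ with widths $w'_1,\dots,w'_{k'}$ must satisfy $\prod_{i=1}^{k'}(w'_i+1)\ge w^k$, so by AM--GM
\[
\frac{1}{k'}\sum_{i=1}^{k'}(w'_i+1)\ \ge\ \Big(\prod_{i=1}^{k'}(w'_i+1)\Big)^{1/k'}\ \ge\ w^{k/k'},
\]
whence the size $\sum_i w'_i\ge k'w^{k/k'}-k'$. Since $w\ge 2$ and $k/k'\ge 1$ give $w^{k/k'}\ge 2$, one checks $k'w^{k/k'}-k'\ge \tfrac12 k'w^{k/k'}-1$, which is the asserted bound.

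I expect the main obstacle to be the construction step rather than the counting. The delicate point is guaranteeing that each composition \emph{exactly} multiplies the piece count by $w$: this requires the full-range surjectivity of every tooth to be preserved both under iterated composition and, crucially, under the continuous perturbation producing the smooth family, so that no pieces collapse or merge across the continuum. Once that invariant is secured, the upper-bound induction and the AM--GM optimization are routine.
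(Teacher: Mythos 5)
Your proposal is correct and follows essentially the same route as the paper: the paper's hard functions $H_{\a^1,\ldots,\a^k}$ are exactly iterated width-$w$ sawtooths with $w^k$ pieces (Theorem~\ref{thm:RELU_DNN_hard} via Lemma~\ref{lem:construct-hard}), and the lower bound likewise comes from a piece-counting upper bound for shallow nets combined with AM--GM (Lemmas~\ref{lem:size-bounds-0} and~\ref{lem:size-bounds}). The only substantive difference is that your layer-by-layer interval-refinement argument gives the slightly tighter piece bound $\prod_{i=1}^{k'}(w'_i+1)$ in place of the paper's $2^{k'-1}(w'_1+1)w'_2\cdots w'_{k'}$, yielding the marginally stronger size bound $k'w^{k/k'}-k'$ before you correctly relax it to the stated $\frac12 k'w^{k/k'}-1$.
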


\noindent In fact our family of hard functions described above has a very intricate structure as stated below.
\begin{theorem}\label{thm:RELU_DNN_hard}
For every $k \geq 1$, $w \geq 2$, every member of the family of hard functions in Theorem~\ref{thm:high-to-low} has $w^k$ pieces and this family can be parametrized by \begin{equation}\label{eq:torus-simplex}\bigcup_{M > 0}\underbrace{(\Delta^{w-1}_M \times \Delta^{w-1}_M\times \ldots \times \Delta^{w-1}_M)}_{k \textrm{ times}},\end{equation}
i.e., for every point in the set above, there exists a distinct function with the stated properties. 
\end{theorem}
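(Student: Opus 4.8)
The plan is to exhibit the family explicitly and then verify the two assertions---the piece count and the injectivity of the parametrization---by a single induction on the number of layers $k$. For a point $(\a^{(1)}, \ldots, \a^{(k)})$ in the $k$-fold product $(\Delta^{w-1}_M)^k$, with $\a^{(i)} = (a^{(i)}_1, \ldots, a^{(i)}_{w-1})$ and $0 < a^{(i)}_1 < \cdots < a^{(i)}_{w-1} < M$, I would associate to layer $i$ the continuous piecewise linear ``zigzag'' $f_i \colon [0,M] \to [0,M]$ determined by the interpolation data $f_i(0)=0$, $f_i(a^{(i)}_j) = M$ for $j$ odd, $f_i(a^{(i)}_j)=0$ for $j$ even, and $f_i(M) \in \{0,M\}$ chosen to continue the alternation, extended affinely on each of the $w$ subintervals cut out by the breakpoints. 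Each $f_i$ then has exactly $w$ affine pieces, and on every piece it is strictly monotone and surjective onto $[0,M]$; these are the building blocks whose composition $f_k \circ \cdots \circ f_1$ is the hard function of Theorem~\ref{thm:high-to-low}. Each $f_i$ is representable by a one-hidden-layer width-$w$ ReLU net by Theorem~\ref{thm:1-dim-2-layer}, so composing $k$ of them (folding the adjacent affine maps together) yields a $(k+1)$-layer width-$w$ ReLU DNN, as required.

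The key structural invariant I would carry through the induction is the following \emph{full zigzag} property of a map $F \colon [0,M]\to[0,M]$ with $m$ pieces: there is a partition $0 = t_0 < t_1 < \cdots < t_m = M$ such that $F$ is affine and strictly monotone on each $[t_{j-1}, t_j]$ with $\{F(t_{j-1}), F(t_j)\} = \{0, M\}$, so that $F$ sweeps the whole range on every piece, alternating peaks and valleys. The crucial composition lemma is that if $F$ is a full zigzag with $m$ pieces and $f$ is a full zigzag with $w$ pieces, then $f \circ F$ is a full zigzag with $mw$ pieces: on each monotone piece of $F$ the argument of $f$ traverses $[0,M]$ exactly once, so $f\circ F$ reproduces the $w$ pieces of $f$ (orientation-reversed when $F$ is decreasing there), and since each such piece of $F$ terminates at a value in $\{0,M\}$ where $f$ has a corner, the slope genuinely changes sign at every $t_j$ and no two pieces merge. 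Starting from $f_1$ and composing successively, induction gives that $f_k \circ \cdots \circ f_1$ is a full zigzag with exactly $w^k$ pieces, proving the first claim.

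For the parametrization claim I would show that the map $(M, \a^{(1)}, \ldots, \a^{(k)}) \mapsto F := f_k\circ\cdots\circ f_1$ is injective, again by induction on $k$. First, $M$ and the domain are read off from $F$ directly as its range. Writing $G := f_k \circ \cdots \circ f_2$ (a full zigzag with $w^{k-1}$ pieces), the monotone-surjective structure of $f_1$ decomposes $F$ into $w$ consecutive blocks over $[0,a^{(1)}_1], [a^{(1)}_1, a^{(1)}_2], \ldots, [a^{(1)}_{w-1}, M]$, each a horizontally rescaled copy of $G$ (orientation-reversed on the decreasing pieces of $f_1$); by the no-merge observation above, each block contributes exactly $w^{k-1}$ pieces. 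Consequently the breakpoint $a^{(1)}_j$ is exactly the right endpoint of the $(j\,w^{k-1})$-th affine piece of $F$, a quantity intrinsic to $F$, which recovers $\a^{(1)}$. Restricting $F$ to the first block $[0,a^{(1)}_1]$, on which $f_1$ is increasing so the copy of $G$ is orientation-preserving, and rescaling its domain to $[0,M]$ recovers $G$; the inductive hypothesis then recovers $\a^{(2)},\ldots,\a^{(k)}$. Hence distinct parameter points yield distinct functions.

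I expect the main obstacle to be the distinctness argument rather than the piece count: one must pin down exactly the right invariant (full zigzags with surjective monotone pieces) so that composition multiplies pieces cleanly, and then argue that the layer-$1$ breakpoints are recoverable from $F$ alone. The piece-counting recovery---identifying $a^{(1)}_j$ as the right endpoint of piece number $j\,w^{k-1}$---is the crux, and it rests on the no-merge fact that every block boundary is a genuine corner because $G$ terminates at a value in $\{0,M\}$ and reflection flips the slope sign. Some care is also needed with the parity of $w$, which decides whether $f_i(M)=0$ or $M$, and with the orientation reversals in alternate blocks, but these affect only bookkeeping and not the counts.
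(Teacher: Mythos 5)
Your proof is correct, and the underlying construction is the same as the paper's: your full zigzags $f_i$ are exactly the restrictions to $[0,M]$ of the paper's functions $h_{\a^i}$ from Definition~\ref{def:hardfunc}, the composition of the $k$ one-hidden-layer nets is Lemma~\ref{lem:comp-DNN}, and your composition lemma (each piece strictly monotone and surjective onto $[0,M]$, hence pieces multiply and block boundaries are genuine corners) is precisely the invariant of Lemma~\ref{lem:construct-hard}, whose proof the paper compresses to ``simple induction on $k$.'' Two differences are worth noting. First, the paper works on all of $\R$ and gets width $w$ from Corollary~\ref{cor:tighter-bound-2-layer}, using that the leftmost piece of $h_{\a^i}$ is flat so its $w+1$ pieces cost only $w$ nodes, whereas you extend your $w$-piece zigzag linearly and invoke Theorem~\ref{thm:1-dim-2-layer}; both yield width-$w$ hidden layers, and your count of $w^k$ pieces refers to the pieces inside $[0,M]$, which is exactly how Lemma~\ref{lem:construct-hard} accounts for the theorem's claim (on all of $\R$ the function generically has two additional unbounded pieces). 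Second, and more substantially, your injectivity argument --- recovering $M$ from the range, recovering $\a^{(1)}$ because $a^{(1)}_j$ is the right endpoint of piece number $jw^{k-1}$ (legitimate since the no-merge property certifies all $w^k$ corners as genuine), and recovering the remaining parameters by restricting to the first block, where $f_1$ is linear, and rescaling to apply induction --- is an argument the paper never writes down: its proof of Theorem~\ref{thm:RELU_DNN_hard} establishes only representability and delegates the piece count, leaving the ``distinct function for every parameter point'' claim implicit. So your proposal is not a different route but a strictly more complete version of the paper's argument; the parameter-recovery step is genuine added content that closes a gap the published proof glosses over, and your closing caveats (parity of $w$, orientation reversal in alternate blocks) are indeed only bookkeeping.
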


\noindent The following is an immediate corollary of Theorem~\ref{thm:high-to-low} by choosing the parameters carefully.

\begin{corollary}\label{cor:special-case-1} For every $k \in \N$ and $\epsilon > 0$, there is a family of functions defined on the real line such that every function $f$ from this family can be represented by a $(k^{1+\epsilon}) + 1$-layer DNN with size $k^{2+\epsilon}$ and if $f$ is represented by a $k+1$-layer DNN, then this DNN must have size at least $\frac12k\cdot k^{k^\epsilon} - 1$. Moreover, this family can be parametrized as,  $\cup_{M>0}\Delta^{k^{2+\epsilon}-1}_M$.
\end{corollary}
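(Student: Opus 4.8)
The plan is to derive the corollary by a direct specialization of Theorem~\ref{thm:high-to-low}, choosing the two free parameters of that theorem carefully. Write $W$ and $K$ for the parameters called $w$ and $k$ in Theorem~\ref{thm:high-to-low}. I would set
\[
W := k, \qquad K := k^{1+\epsilon},
\]
so that the promised ``deep'' representation is a $(K+1)$-layer ReLU DNN of width $W$, i.e.\ a $(k^{1+\epsilon}+1)$-layer network of width $k$, exactly as required. Since this network has $K$ hidden layers, each of width $W$, its size is $W\cdot K = k\cdot k^{1+\epsilon} = k^{2+\epsilon}$, which is the claimed size of the deep network.

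Next I would read off the lower bound for shallow representations. The corollary concerns representations by a $(k+1)$-layer DNN, which in the language of Theorem~\ref{thm:high-to-low} means specializing the conclusion to $k' = k$ (so that $k'+1 = k+1$). One first checks the hypothesis $k' \le K$: indeed $k \le k^{1+\epsilon}$ because $k^{\epsilon}\ge 1$ whenever $k\ge 1$ and $\epsilon>0$. Theorem~\ref{thm:high-to-low} then guarantees that any such $(k+1)$-layer representation has size at least
\[
\tfrac12\, k'\, W^{K/k'} - 1 \;=\; \tfrac12\, k\, k^{\,k^{1+\epsilon}/k} - 1 \;=\; \tfrac12\, k\cdot k^{k^{\epsilon}} - 1,
\]
where I used $K/k' = k^{1+\epsilon}/k = k^{\epsilon}$. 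This is precisely the stated bound. For the parametrization I would invoke Theorem~\ref{thm:RELU_DNN_hard} under the same substitution: the family is parametrized by $\bigcup_{M>0}(\Delta^{W-1}_M)^{K} = \bigcup_{M>0}(\Delta^{k-1}_M)^{k^{1+\epsilon}}$, a product of $k^{1+\epsilon}$ copies of $\Delta^{k-1}_M$. This product carries $(k-1)k^{1+\epsilon} = k^{2+\epsilon}-k^{1+\epsilon}$ free coordinates together with the scale $M$, which agrees with the single simplex $\Delta^{k^{2+\epsilon}-1}_M$ up to the lower-order term $k^{1+\epsilon}$; I would record the family in the rounded single-simplex form of the statement, noting that the product embeds smoothly into a simplex of this dimension.

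The one genuine subtlety is integrality. Theorem~\ref{thm:high-to-low} requires its parameters to be natural numbers, whereas $K = k^{1+\epsilon}$ (and hence both the layer count $k^{1+\epsilon}+1$ and the size $k^{2+\epsilon}$) need not be an integer for arbitrary real $\epsilon>0$. The clean reading of the corollary is therefore for those $\epsilon$ (equivalently those $k$) making $k^{1+\epsilon}\in\N$; in general one replaces $K$ by $\lceil k^{1+\epsilon}\rceil$ and absorbs the rounding into the already super-exponential bound, which leaves the stated growth rate unaffected. Beyond this bookkeeping and the mild imprecision in the single-simplex parametrization, the corollary is an immediate reparametrization of Theorem~\ref{thm:high-to-low}, and I expect the only real ``work'' to be matching each quantity---$K+1$ layers, width $W$, the exponent $K/k'$, and the product of simplices---to the corresponding expression in the corollary.
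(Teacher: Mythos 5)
Your proposal is correct and coincides with the paper's own treatment: the paper offers no separate proof, stating only that the corollary is ``immediate from Theorem~\ref{thm:high-to-low} by choosing the parameters carefully,'' and the intended choice is exactly your substitution $w=k$, $k_{\mathrm{thm}}=k^{1+\epsilon}$, $k'=k$ (with the parametrization read off from Theorem~\ref{thm:RELU_DNN_hard}). Your two caveats---the integrality of $k^{1+\epsilon}$ and the fact that the product $\bigcup_{M>0}(\Delta^{k-1}_M)^{k^{1+\epsilon}}$ has dimension $k^{2+\epsilon}-k^{1+\epsilon}$ rather than matching the stated single simplex $\Delta^{k^{2+\epsilon}-1}_M$ exactly---are genuine loose ends that the paper silently glosses over, and you resolve them appropriately.
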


\noindent A particularly illuminative special case is obtained by setting $\epsilon = 1$ in Corollary~\ref{cor:special-case-1}: 
\begin{corollary}\label{cor:special-case-2} For every natural number $k \in \N$, there is a family of functions parameterized by the set $\cup_{M>0}\Delta^{k^3-1}_M$ such that any $f$ from this family can be represented by a $k^2 + 1$-layer DNN with $k^3$ nodes, and every $k+1$-layer DNN that represents $f$ needs at least $\frac12k^{k + 1} - 1$ nodes.
\end{corollary}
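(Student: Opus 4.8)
The plan is to derive the statement as the $\epsilon = 1$ instance of Corollary~\ref{cor:special-case-1}, which itself is a parameter-choice specialization of Theorem~\ref{thm:high-to-low}; the paper signals exactly this. So first I would simply substitute $\epsilon = 1$ into Corollary~\ref{cor:special-case-1} and check that each of its four assertions collapses to the claimed form: the depth $(k^{1+\epsilon})+1$ becomes $k^2 + 1$; the size $k^{2+\epsilon}$ becomes $k^3$; the shallow-net lower bound $\frac12 k \cdot k^{k^\epsilon} - 1$ becomes $\frac12 k \cdot k^{k} - 1 = \frac12 k^{k+1} - 1$; and the parameter set $\cup_{M>0}\Delta^{k^{2+\epsilon}-1}_M$ becomes $\cup_{M>0}\Delta^{k^3 - 1}_M$. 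Each of these is an immediate algebraic identity, so granting Corollary~\ref{cor:special-case-1} there is essentially nothing left to do.

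To make the argument self-contained I would instead instantiate Theorem~\ref{thm:high-to-low} directly. Writing $K$ for the number of hidden layers and $W$ for the width used in that theorem (to avoid clashing with the $k$ of the corollary), I would set $W = k$ and $K = k^2$. The resulting family is representable by a $(K+1) = (k^2+1)$-layer ReLU DNN of width $k$, whose total size is $K \cdot W = k^2 \cdot k = k^3$, matching the ``$k^3$ nodes'' claim. For the lower bound I would apply the theorem with $k' = k$; this is legitimate because the theorem's hypothesis requires $k' \le K$, and here $k \le k^2$ for every $k \in \N$. The guaranteed lower bound $\frac12 k' W^{K/k'} - 1$ then evaluates to $\frac12 k \cdot k^{k^2/k} - 1 = \frac12 k \cdot k^{k} - 1 = \frac12 k^{k+1} - 1$, exactly the required bound on any $(k+1)$-layer representation.

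For the smooth-parametrization claim I would invoke Theorem~\ref{thm:RELU_DNN_hard}, which already exhibits the hard family as a smoothly parametrized continuum, indexed by a product of scaled simplices with one copy per hidden layer, and in which distinct parameters yield distinct functions. The step needing the most care is reconciling this product-of-simplices description with the single simplex $\Delta^{k^3-1}_M$ stated in the corollary: with $W=k$ and $K=k^2$, Theorem~\ref{thm:RELU_DNN_hard} furnishes a family indexed by $K$ copies of $\Delta^{W-1}_M$, i.e. a $k^2(k-1)$-dimensional smooth family, and one must argue that this (or a reparametrization of it) is faithfully captured by a simplex of the form $\Delta^p_M$ as asserted. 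I expect this bookkeeping about the dimension and the ordering constraints defining the sets $\Delta^p_M$ to be the only genuinely nontrivial point; the depth, size, and lower-bound assertions are pure arithmetic once Theorem~\ref{thm:high-to-low} is in hand.
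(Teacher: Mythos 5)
Your proposal is correct and takes essentially the same route as the paper: the paper obtains this corollary precisely by setting $\epsilon = 1$ in Corollary~\ref{cor:special-case-1}, which in turn is just Theorem~\ref{thm:high-to-low} instantiated with width $w = k$, $k^2$ hidden layers, and $k' = k$ (legitimate since $k \leq k^2$), exactly the arithmetic you carry out. The parametrization subtlety you flag is real but is a looseness in the paper itself rather than a gap in your argument: Theorem~\ref{thm:RELU_DNN_hard} naturally yields the product $\cup_{M>0}(\Delta^{k-1}_M)^{k^2}$, of dimension $k^3 - k^2$, and the paper restates this as the single simplex $\cup_{M>0}\Delta^{k^3-1}_M$ (dimension $k^3 - 1$) without any reconciling argument, so you are not missing a step that the paper's own proof supplies.
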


We can also get hardness of approximation versions of Theorem~\ref{thm:high-to-low} and Corollaries~\ref{cor:special-case-1} and~\ref{cor:special-case-2}, with the same gaps (upto constant terms), using the following theorem.

\begin{theorem}\label{thm:approximation} For every $k \geq 1$, $w \geq 2$, there exists a function $f_{k,w}$ that can be represented by a $(k+1)$-layer ReLU DNN with $w$ nodes in each layer, such that for all $\delta > 0$ and $k' \leq k$ the following holds:
$$\inf_{g \in \mathcal{G}_{k',\delta}}\int_{x=0}^{1}{\vert f_{k,w}(x) - g(x) \vert dx} > \delta,$$ where $\mathcal{G}_{k',\delta}$ is the family of functions representable by ReLU DNNs with depth at most $k'+1$, and size at most $k'\frac{w^{k/k'}(1 -4 \delta)^{1/k'}}{2^{1+1/k'}}$.
\end{theorem}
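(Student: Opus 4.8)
The plan is to reuse the explicit hard function already produced for Theorem~\ref{thm:high-to-low} and convert its large piece count into an $L^1$ gap. Concretely, I would take $f_{k,w}$ to be the symmetric member of the parametrized family of Theorem~\ref{thm:RELU_DNN_hard}: the uniform sawtooth on $[0,1]$ obtained by composing $k$ copies of the equally-spaced $w$-piece zigzag, so that $f_{k,w}$ maps $[0,1]$ into $[0,1]$, has exactly $w^k$ affine pieces, and consists of $\lfloor w^k/2\rfloor$ congruent triangular ``teeth'', each of which rises linearly from $0$ to $1$ and back to $0$ over a base of width $2\ell$ with $\ell = 1/w^k$. This function is representable by a $(k+1)$-layer, width-$w$ ReLU DNN exactly as in Theorem~\ref{thm:high-to-low}. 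Two facts drive the argument: an \emph{upper} bound on the number of pieces of any competitor $g$, and a per-tooth \emph{lower} bound on the $L^1$ error incurred wherever $g$ is forced to be affine across a whole tooth.

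For the per-tooth bound I would show that for any single affine function $g$ and any one tooth, $\int_{\text{tooth}} |f_{k,w} - g|\,dx \ge \ell/2 = \tfrac{1}{2w^k}$. By the reflection symmetry of a tooth about its peak, the optimal affine approximant is the horizontal line $g\equiv \tfrac12$, and a direct computation gives $\int |f_{k,w}-c|\,dx = \ell(2c^2-2c+1)$, minimized at $c=\tfrac12$ with value $\ell/2$; this is the only genuinely analytic step and it is routine. Next I would count: if $g$ is piecewise linear with $p$ pieces, it has at most $p-1$ breakpoints, each lying in the interior of at most one tooth, so at least $\lfloor w^k/2\rfloor - (p-1)$ teeth are crossed by no breakpoint of $g$ and hence have $g$ affine over them. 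Summing the per-tooth bound over these ``uncut'' teeth yields
$$\int_0^1 |f_{k,w}-g|\,dx \;\ge\; \Big(\tfrac{w^k}{2}-p+1\Big)\frac{1}{2w^k} \;=\; \frac14 - \frac{p}{2w^k} + \frac{1}{2w^k}.$$

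Finally I would convert the size constraint defining $\mathcal{G}_{k',\delta}$ into a piece bound. The piece-counting estimate underlying Theorem~\ref{thm:high-to-low} shows that a $(k'+1)$-layer ReLU DNN of size $s$ computes a PWL function with at most $(2s/k')^{k'}$ pieces. Plugging in the admissible size $s \le k'\,w^{k/k'}(1-4\delta)^{1/k'}/2^{1+1/k'}$ gives $p \le (2s/k')^{k'} \le \tfrac12 w^k(1-4\delta)$, and substituting into the displayed inequality produces $\int_0^1|f_{k,w}-g|\,dx \ge \delta + \tfrac{1}{2w^k} > \delta$ for every $g\in\mathcal{G}_{k',\delta}$; taking the infimum over $g$ preserves the strict inequality because the bound $\delta+\tfrac{1}{2w^k}$ is uniform. (For $\delta\ge\tfrac14$ the admissible size is nonpositive, so $\mathcal{G}_{k',\delta}$ is trivial and the claim is vacuous.) The main obstacle is not any single step but making the constants line up exactly: the factor $2^{1+1/k'}$, the exponent $1/k'$ on $1-4\delta$, and the threshold $\tfrac12 w^k(1-4\delta)$ are all dictated by forcing the tight piece bound $(2s/k')^{k'}$ to meet the tooth-counting threshold, so the delicate part is choosing the normalization of $f_{k,w}$ (range $[0,1]$, tooth half-width $1/w^k$) so that the per-tooth constant is exactly $1/(2w^k)$ and the arithmetic closes.
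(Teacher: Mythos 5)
Your proposal follows essentially the same route as the paper's proof: the same uniform sawtooth $f_{k,w}=h_{\a}\circ\cdots\circ h_{\a}$ with $\a=(\frac1w,\ldots,\frac{w-1}{w})$, the same count of $\lfloor w^k/2\rfloor-(p-1)$ teeth on which a $p$-piece competitor must be affine, the same per-tooth $L^1$ lower bound of $\frac{1}{2w^k}$ (the paper obtains it by directly minimizing over the two endpoint values $y_1,y_2$, you by a symmetrization-to-constants argument --- a cosmetic difference), and the same conversion of the size budget into the piece bound $p\le\frac12 w^k(1-4\delta)$ via Lemma~\ref{lem:size-bounds}. One harmless slip: when $w^k$ is odd you may only use $\lfloor w^k/2\rfloor\ge\frac{w^k-1}{2}$, so your displayed bound should read $\frac14-\frac{2p-1}{4w^k}$ and the uniform margin is $\delta+\frac{1}{4w^k}$ rather than $\delta+\frac{1}{2w^k}$; this still gives the strict inequality for the infimum (and indeed makes explicit the strictness that the paper's own chain, which ends at $\ge\delta$ per competitor, leaves implicit).
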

The depth-size trade-off results in Theorems~\ref{thm:high-to-low}, and~\ref{thm:approximation} extend and improve Telgarsky's theorems from~\citep{T1,T2} in the following three ways: 

\begin{itemize} 


\item[(i)] If we use our Theorem \ref{thm:approximation} to the pair of neural nets considered by Telgarsky in Theorem $1.1$ in \cite{T2} which are at depths $k^3$ (of size also scaling as $k^3$) and $k$ then for this purpose of approximation in the $\ell_1-$norm we would get a size lower bound for the shallower net which scales as $\Omega(2^{k^2})$ which is exponentially (in depth) larger than the lower bound of $\Omega(2^k)$ that Telgarsky can get for this scenario. 

\item[(ii)] Telgarsky's family of hard functions is parameterized by a single natural number $k$. In contrast, we show that for every {\em pair} of natural numbers $w$ and $k$, and a point from the set in equation~\ref{eq:torus-simplex}, there exists a ``hard'' function which to be represented by a depth $k'$ network would need a size of at least $w^{\frac{k}{k'}}k'$. With the extra flexibility of choosing the parameter $w$, for the purpose of showing gaps in representation ability of deep nets we can shows size lower bounds which are \emph {super}-exponential in depth as explained in Corollaries~\ref{cor:special-case-1} and~\ref{cor:special-case-2}.

\item [(iii)]  A characteristic feature of the ``hard'' functions in Boolean circuit complexity is that they are usually a countable family of functions and not a ``smooth'' family of hard functions. In fact, in the last section of~\cite{T1}, Telgarsky states this as a ``weakness'' of the state-of-the-art results on ``hard'' functions for both Boolean circuit complexity and neural nets research.  In contrast, we provide a smoothly parameterized family of ``hard'' functions in Section~\ref{sec:gap-results} (parametrized by the set in equation~\ref{eq:torus-simplex}). Such a continuum of hard functions wasn't demonstrated before this work. 
\end{itemize}

We point out that Telgarsky's results in~\citep{T2} apply to deep neural nets with a host of different activation functions, whereas, our results are specifically for neural nets with rectified linear units. In this sense, Telgarsky's results from~\citep{T2} are more general than our results in this paper, but with weaker gap guarantees. Eldan-Shamir~\citep{OS,eldan2016power} show that there exists an $\R^n\to \R$ function that can be represented by a 3-layer DNN, that takes exponential in $n$ number of nodes to be approximated to within some constant by a 2-layer DNN. While their results are not immediately comparable with Telgarsky's or our results, it is an interesting open question to extend their results to a constant depth hierarchy statement analogous to the recent result of Rossman et al~\citep{rossman2015average}. We also note that in last few years, there has been much effort in the community to show size lowerbounds on ReLU DNNs trying to approximate various classes of functions which are themselves not necessarily exactly representable by ReLU DNNs~\citep{yarotsky2016error,liang2016deep,safran2017depth}.

\subsection{A continuum of hard functions for $\R^n \to \R$ for $n\geq 2$} \label{sec:zon}

{One measure of complexity of a family of $\R^n\to \R$ ``hard'' functions represented by ReLU DNNs is the asymptotics of the number of pieces as a function of dimension $n$, depth $k+1$ and size $s$ of the ReLU DNNs.  More precisely, suppose one has a family $\mathcal{H}$ of functions such that for every $n,k,w \in \N$ the family contains at least one $\R^n\to \R$ function representable by a ReLU DNN with depth at most $k+1$ and maximum width at most $w$. The following definition formalizes a notion of complexity for such a $\mathcal{H}$. 
\begin{definition}[$\comp_{\F}(n,k,w)$]
The measure $\comp_{\F}(n,k,w)$ is defined as the maximum number of pieces (see Definition~\ref{def:PWL}) of a $\R^n\to \R$ function from $\F$ that can be represented by a ReLU DNN with depth at most $k+1$ and maximum width at most $w$.
\end{definition}

Similar measures have been studied in previous works~\cite{Mon,Pas,raghu2016expressive}. {The best known families $\F$ are the ones from Theorem~4 of~\citep{Mon} and a mild generalization of Theorem~$1.1$ of~\citep{T2} to $k$ layers of ReLU activations with width $w$; these constructions achieve 
$\bigg(\lfloor(\frac{w}{n})\rfloor\bigg)^{(k-1)n}(\sum_{j=0}^{n}  {w \choose j})$}and $\comp_{\F}(n,k,s) = O(w^k)$, respectively.} At the end of this section we would explain the precise sense in which we improve on these numbers. An analysis of this complexity measure is done using integer programming techniques in~\citep{serra2017bounding}. 
\begin{definition} Let $\b^1, \ldots, \b^m \in \R^n$. The zonotope formed by $\b^1, \ldots, \b^m \in \R^n$ is defined as $$Z(\b^1, \ldots, \b^m):= \{\lambda_1\b^1 + \ldots + \lambda_m\b^m: -1 \leq \lambda_i \leq 1, \;\; i =1, \ldots, m\}.$$
The set of vertices of $Z(\b^1, \ldots, \b^m)$ will be denoted by $\verts(Z(\b^1, \ldots, \b^m))$. The {\em support function} $\gamma_{Z(\b^1, \ldots, \b^m)}:\R^n \to \R$ associated with the zonotope $Z(\b^1, \ldots, \b^m)$ is defined as $$\gamma_{Z(\b^1, \ldots, \b^m)}(\rr) = \max_{\x \in Z(\b^1, \ldots, \b^m)} \langle \rr, \x \rangle.$$
\end{definition}
\vspace{-3mm}
The following results are well-known in the theory of zonotopes~\citep{ziegler1995lectures}.

\begin{theorem}\label{thm:zonotope-struct} The following are all true.
\begin{enumerate}
\item { $|\verts(Z(\b^1, \ldots, \b^m))| \leq \sum_{i=0}^{n-1}{m-1 \choose i}$.} The set of $(\b^1, \ldots, \b^m) \in \R^n \times \ldots \times \R^n$ such that this {\em does not} hold at equality is a 0 measure set. 
\item $\gamma_{Z(\b^1, \ldots, \b^m)}(\rr) = \max_{\x \in Z(\b^1, \ldots, \b^m)} \langle \rr, \x \rangle = \max_{\x \in \verts(Z(\b^1, \ldots, \b^m))} \langle \rr, \x \rangle,$ and $\gamma_{Z(\b^1, \ldots, \b^m)}$ is therefore a piecewise linear function with $|\verts(Z(\b^1, \ldots, \b^m))|$ pieces.
\item $\gamma_{Z(\b^1, \ldots, \b^m)}(\rr) = |\langle \rr, \b^1\rangle| + \ldots + |\langle \rr, \b^m\rangle|$.
\end{enumerate}
\end{theorem}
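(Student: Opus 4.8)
The plan is to treat the three parts in order of increasing difficulty, since parts~3 and~2 follow quickly from elementary convexity while the vertex count in part~1 carries the real combinatorial content. For part~3 I would observe that $Z(\b^1,\ldots,\b^m)$ is exactly the Minkowski sum $S_1+\cdots+S_m$ of the centrally symmetric segments $S_i=\{\lambda\b^i:-1\le\lambda\le1\}$. The support function of a Minkowski sum is the sum of the support functions of the summands, and for a single segment one computes directly
\[
\max_{-1\le\lambda\le1}\langle\rr,\lambda\b^i\rangle=\max\{\langle\rr,\b^i\rangle,\,-\langle\rr,\b^i\rangle\}=|\langle\rr,\b^i\rangle|.
\]
Summing over $i$ yields $\gamma_{Z}(\rr)=\sum_{i=1}^m|\langle\rr,\b^i\rangle|$, which is part~3.

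For part~2, note that $Z$ is the image of the cube $[-1,1]^m$ under the linear map $\lambda\mapsto\sum_i\lambda_i\b^i$, hence a compact polytope equal to the convex hull of its finitely many vertices. A linear functional on a compact convex set attains its maximum at an extreme point, so $\max_{\x\in Z}\langle\rr,\x\rangle=\max_{\x\in\verts(Z)}\langle\rr,\x\rangle$. Writing $\gamma_Z$ as the pointwise maximum of the finitely many linear functions $\rr\mapsto\langle\rr,\v\rangle$ for $\v\in\verts(Z)$ exhibits $\gamma_Z$ as piecewise linear, and since each vertex is the unique maximizer over a full-dimensional cone of directions, the number of affine pieces equals $|\verts(Z)|$.

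The crux is part~1, where the key idea is the duality between the vertices of $Z$ and the regions of the central hyperplane arrangement generated by the $\b^i$. For a generic direction $\rr$ with $\langle\rr,\b^i\rangle\neq0$ for all $i$, the maximizer of $\langle\rr,\cdot\rangle$ over $Z$ is the single point $\x^\star(\rr)=\sum_i\operatorname{sign}(\langle\rr,\b^i\rangle)\,\b^i$, which depends only on the sign vector $\sigma(\rr)=(\operatorname{sign}\langle\rr,\b^1\rangle,\ldots,\operatorname{sign}\langle\rr,\b^m\rangle)\in\{-1,+1\}^m$. Every vertex of $Z$ is the unique maximizer for some direction, and uniqueness is an open condition, so each vertex is realized as $\x^\star(\rr)$ for some such $\rr$; hence $|\verts(Z)|$ is at most the number of distinct sign vectors, which is precisely the number of full-dimensional regions cut out in $\R^n$ by the central arrangement of hyperplanes $H_i=\{\rr:\langle\rr,\b^i\rangle=0\}$. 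I would then invoke the classical count for the number of regions of an arrangement of $m$ hyperplanes through the origin in $\R^n$, which is controlled by the stated binomial sum and attains its maximum exactly when the arrangement is simple (equivalently, every $n$ of the vectors $\b^i$ are linearly independent); this count is obtained by the standard deletion--restriction induction on $m$, where removing one hyperplane changes the region count by the number of regions of the arrangement it induces in $\R^{n-1}$.

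Finally, for the genericity statement I would argue that general position of the $\b^i$ fails precisely when some $n\times n$ minor formed from $n$ of the vectors $\b^i$ vanishes; each such condition is the zero set of a nonzero polynomial in the coordinates of $(\b^1,\ldots,\b^m)$, and a finite union of such zero sets has Lebesgue measure zero, so equality in the vertex bound holds off a measure-zero set. I expect the main obstacle to be the region-counting step of part~1 together with upgrading the vertex--region correspondence from a surjection to a bijection in the generic case, so that distinct sign vectors give distinct vertices and the bound is actually tight; by comparison the support-function identity and the measure-zero argument are routine.
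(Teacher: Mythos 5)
You should first know that the paper contains no proof of this theorem at all: it is stated as ``well-known in the theory of zonotopes'' with a citation to Ziegler, so your proposal is being measured against the standard literature argument rather than anything in the paper --- and the route you chose (support functions add under Minkowski sums for part~3; extreme-point maximization and the normal fan for part~2; the duality between vertices of $Z$ and regions of the central arrangement $\{H_i\}$, plus deletion--restriction and a vanishing-polynomial measure-zero argument for part~1) is exactly that standard argument. Parts~2 and~3 of your sketch are correct as written, and the measure-zero reasoning is sound.

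There is, however, one concrete numerical failure in part~1, at the step where you say the region count ``is controlled by the stated binomial sum.'' The classical count for a \emph{simple central} arrangement of $m$ hyperplanes through the origin in $\R^n$ (Steiner--Schl\"afli; Winder; Cover) is $2\sum_{i=0}^{n-1}\binom{m-1}{i}$, not $\sum_{i=0}^{n-1}\binom{m-1}{i}$: the deletion--restriction recursion $r(m,n)=r(m-1,n)+r(m-1,n-1)$ with boundary values $r(m,1)=2$ and $r(1,n)=2$ produces the extra factor of $2$. This factor is forced by your own construction: since $\x^\star(-\rr)=-\x^\star(\rr)$, the zonotope is centrally symmetric and its vertices come in antipodal pairs, so the generic vertex count is even. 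Already for $m=n=2$ a generic $Z(\b^1,\b^2)$ is a parallelogram with $4$ vertices while the stated sum is $2$ (for the cube, $m=n=3$: $8$ versus $4$). So your argument, carried out carefully, proves $|\verts(Z(\b^1,\ldots,\b^m))|\le 2\sum_{i=0}^{n-1}\binom{m-1}{i}$ with equality off a measure-zero set --- the correct classical statement --- and simultaneously shows that the inequality as printed in the theorem cannot be proved, because it is false by a factor of $2$. One further small repair: the genericity condition you name (every $n$ of the $\b^i$ linearly independent) guarantees simplicity of the arrangement, but for the vertex--region bijection you do not actually need an extra generic injectivity condition: whenever all $\b^i\neq\0$, the open normal cone of any vertex cannot meet any $H_i$ (a direction $\rr$ with $\langle\rr,\b^i\rangle=0$ maximizes over a face containing the segment $[-\b^i,\b^i]$ translated, contradicting uniqueness), hence each such cone is exactly one region, and $|\verts(Z)|$ equals the number of regions; genericity enters only in making the region count attain its maximum.
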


\begin{definition}[extremal zonotope set]\label{extremal} The set $S(n,m)$ will denote the set of $(\b^1, \ldots, \b^m) \in \R^n \times \ldots \times \R^n$ such that 
{ $|\verts(Z(\b^1, \ldots, \b^m))| = \sum_{i=0}^{n-1}{m-1 \choose i}$.} $S(n,m)$ is the so-called ``extremal zonotope set'', which is a subset of $\R^{nm}$, whose complement has zero Lebesgue measure in $\R^{nm}$.
\end{definition}

\begin{lemma}\label{lem:zonotope-DNN} Given any $\b^1, \ldots, \b^m \in \R^n$, there exists a 2-layer ReLU DNN with size $2m$ which represents the function $\gamma_{Z(\b^1, \ldots, \b^m)}(\rr)$.
\end{lemma}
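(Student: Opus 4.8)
The plan is to invoke part 3 of Theorem~\ref{thm:zonotope-struct}, which already supplies the representation $\gamma_{Z(\b^1, \ldots, \b^m)}(\rr) = |\langle \rr, \b^1\rangle| + \ldots + |\langle \rr, \b^m\rangle|$. Once this form is in hand, the lemma reduces to the purely constructive task of building a single-hidden-layer ReLU network that computes a sum of absolute values of linear forms in the input $\rr$, and all that remains is to translate each absolute value into ReLU gates while keeping the node count at exactly $2m$.

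The central observation I would use is the elementary identity $|z| = \max\{0,z\} + \max\{0,-z\} = \sigma(z) + \sigma(-z)$, valid for every $z \in \R$. Applying it to the linear form $z = \langle \rr, \b^i\rangle$ gives $|\langle \rr, \b^i\rangle| = \sigma(\langle \rr, \b^i\rangle) + \sigma(\langle \rr, -\b^i\rangle)$, so each summand is realized by precisely two ReLU units. Since there are $m$ summands, the hidden layer needs $2m$ units in total, matching the claimed size.

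Concretely, I would define the first transformation $T_1 \in \cA_n^{2m}$ to be the (zero-translation, hence linear) map whose $2m$ output coordinates are $\langle \rr, \b^1\rangle, \langle \rr, -\b^1\rangle, \ldots, \langle \rr, \b^m\rangle, \langle \rr, -\b^m\rangle$; equivalently, $T_1$ is given by the $2m \times n$ matrix with rows $\b^1, -\b^1, \ldots, \b^m, -\b^m$. Applying $\sigma$ entrywise produces the vector with coordinates $\sigma(\langle \rr, \pm\b^i\rangle)$, and I then take $T_2 \in \cL_{2m}^1$ to be the all-ones summation functional. Composing yields $T_2 \circ \sigma \circ T_1(\rr) = \sum_{i=1}^m \bigl(\sigma(\langle \rr, \b^i\rangle) + \sigma(\langle \rr, -\b^i\rangle)\bigr) = \sum_{i=1}^m |\langle \rr, \b^i\rangle| = \gamma_{Z(\b^1, \ldots, \b^m)}(\rr)$. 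By Definition~\ref{def:relu-dnn} this is a $2$-layer ReLU DNN (one hidden layer of width $2m$), and its size is $2m$, as required.

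There is no substantial obstacle in this argument: the verification amounts to checking the identity $|z| = \sigma(z) + \sigma(-z)$ and confirming that the coordinate bookkeeping gives exactly $2m$ nodes. The only conceptual point worth emphasizing is that one should build the network from the sum-of-absolute-values expression in part 3 of Theorem~\ref{thm:zonotope-struct}, rather than from the max-over-vertices expression in parts 1 and 2; the latter would naively suggest a maximum of linearly many affine functions whose width scales with the number of vertices (which is exponential in general), whereas the absolute-value form delivers the tight $2m$ bound immediately.
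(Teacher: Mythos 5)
Your proof is correct and follows essentially the same route as the paper: both start from part 3 of Theorem~\ref{thm:zonotope-struct} and realize each absolute value $|\langle \rr, \b^i\rangle|$ with two ReLU units, yielding a single hidden layer of width $2m$. You merely make explicit the network construction ($T_1$ with rows $\pm\b^i$ and $T_2$ the summation functional) that the paper leaves implicit in its one-line observation.
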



\begin{figure}[t!]
\begin{subfigure}{.33\textwidth}
  \centering
  \includegraphics[width=.8\linewidth]{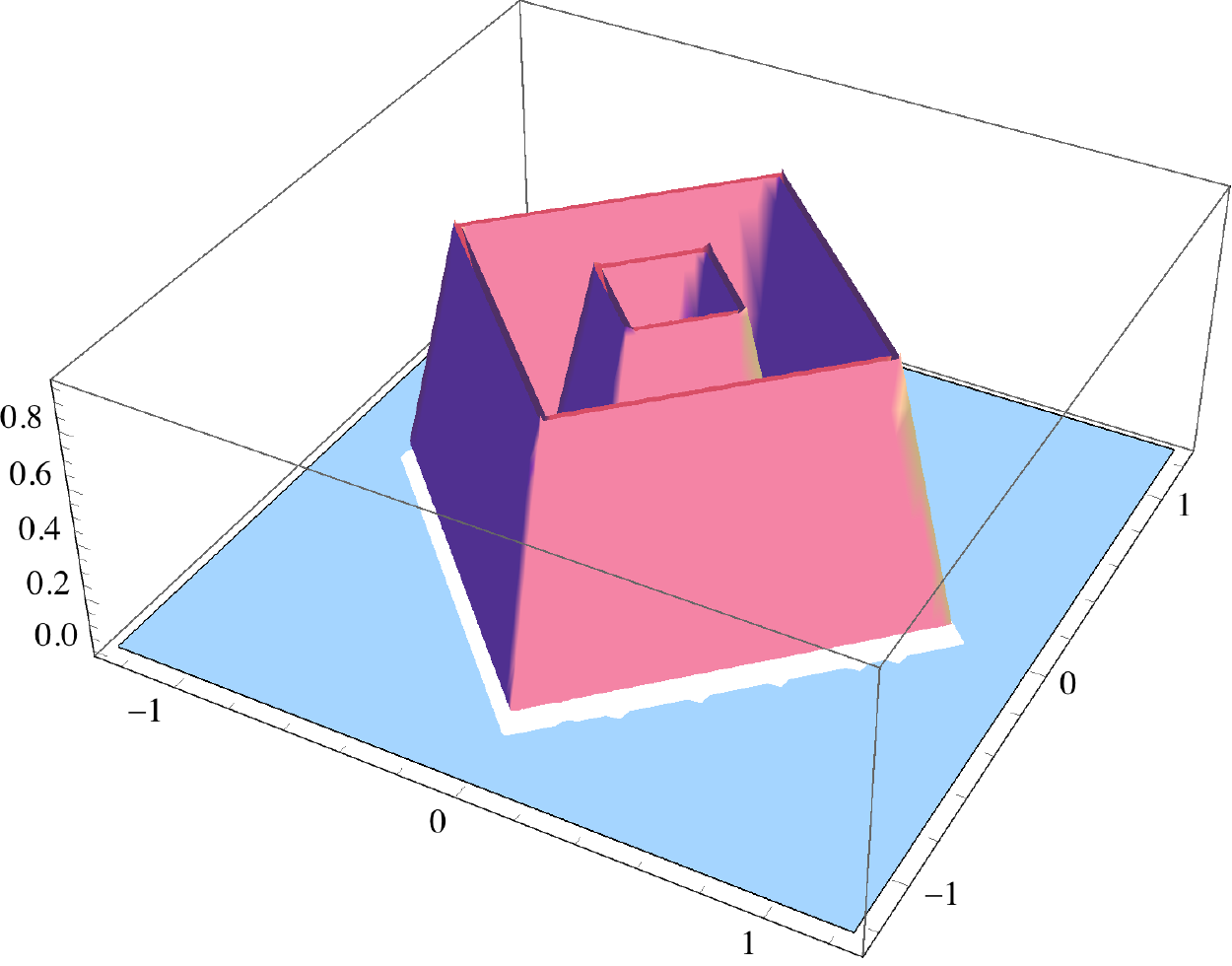}
  \caption{$H_{\frac{1}{2},\frac{1}{2}} \circ N_{\ell_1}$}
  \label{fig:sfig1}
\end{subfigure}%
\begin{subfigure}{.33\textwidth}
  \centering
  \includegraphics[width=.8\linewidth]{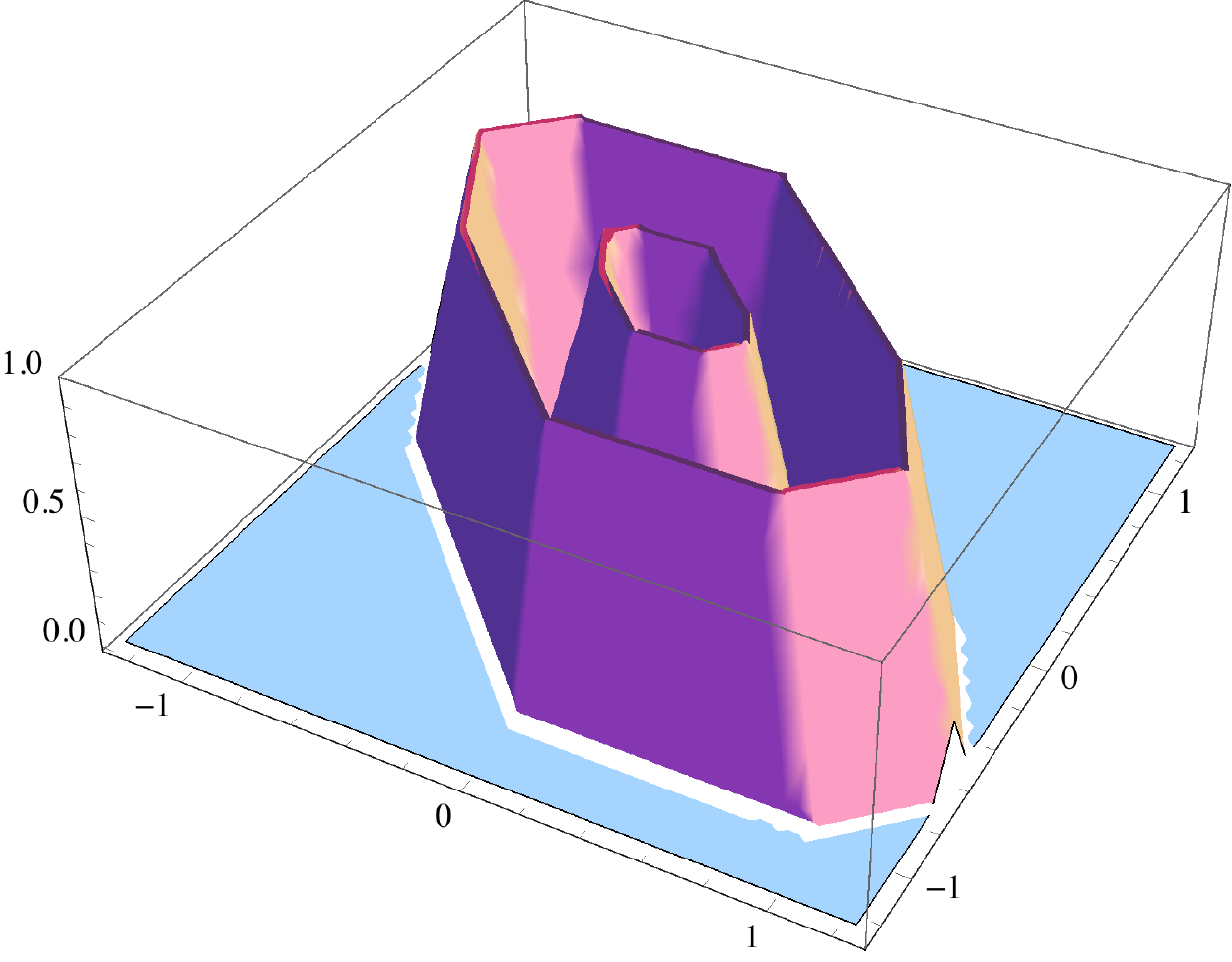}
  \caption{$H_{\frac{1}{2},\frac{1}{2}} \circ \gamma_{Z(\b^1,\b^2,\b^3,\b^4)}$}
  \label{fig:sfig2}
\end{subfigure}
\begin{subfigure}{.33\textwidth}
  \centering
  \includegraphics[width=.8\linewidth]{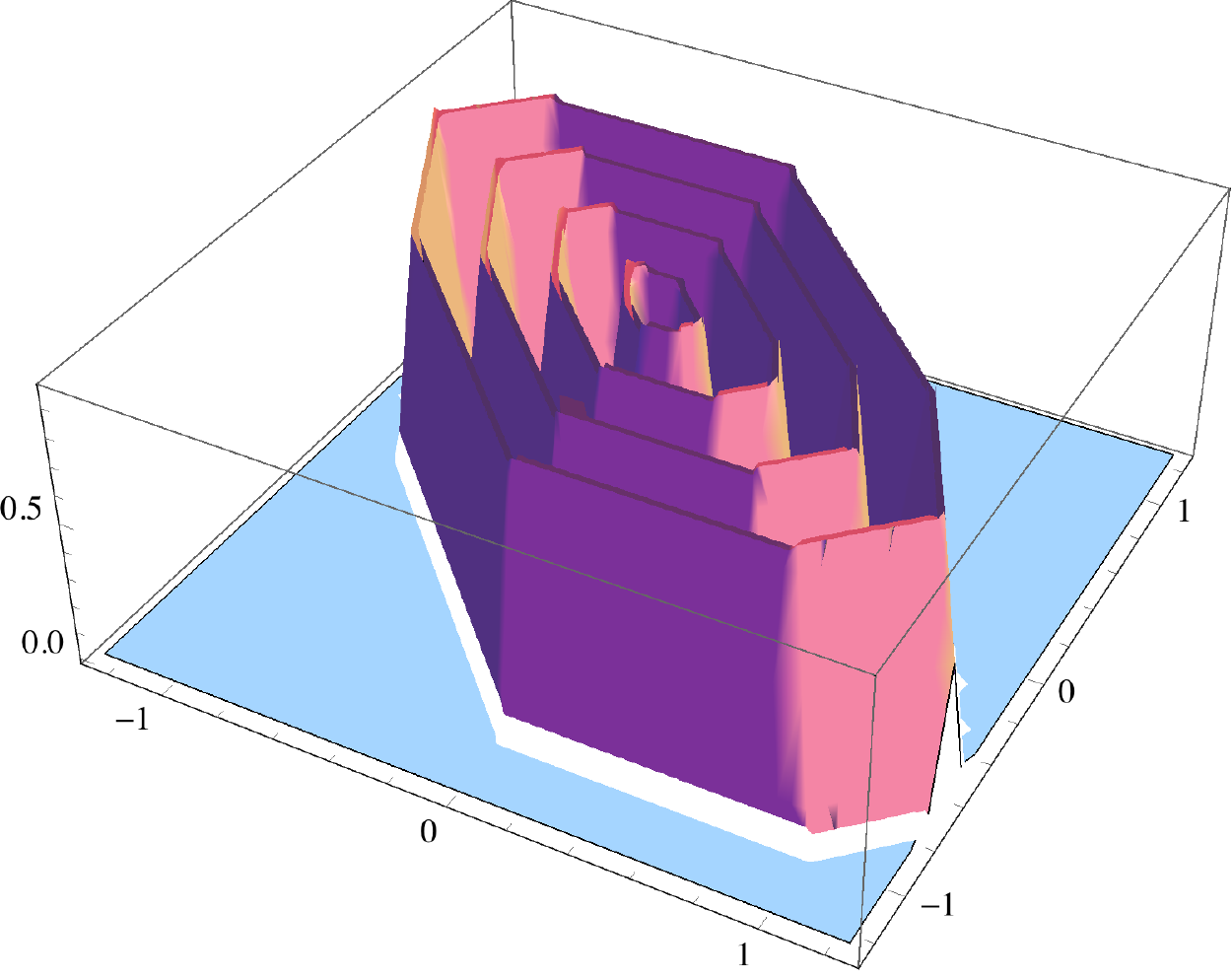}
  \caption{$H_{\frac{1}{2},\frac{1}{2},\frac{1}{2}} \circ \gamma_{Z(\b^1,\b^2,\b^3,\b^4)}$}
  \label{fig:sfig3}
\end{subfigure}
\caption{We fix the $\a$ vectors for a two hidden layer $\R \rightarrow \R$ hard function as $\a ^1 = \a ^2 = (\frac{1}{2}) \in \Delta ^1_1$ Left: A specific hard function induced by $\ell_1$ norm: $\operatorname{ZONOTOPE}_{2,2,2}^{2}[\a ^1, \a ^2 , \b ^1, \b ^2]$ where $\b ^1 = (0,1)$ and $\b ^2 = (1,0)$. Note that in this case the function can be seen as a composition of $H_{\a ^1, \a ^2}$ with $\ell_1$-norm $N_{\ell_1}(x):=\|x\|_1 = \gamma_{Z\left((0,1),(1,0)\right)}$. Middle: A typical hard function $\operatorname{ZONOTOPE}_{2,2,4}^{2}[\a ^1 , \a ^2,\c ^1,\c ^2,\c ^3,\c ^4]$ with generators $\c ^1 = (\frac{1}{4},\frac{1}{2}), \c ^2=(-\frac{1}{2},0), \c ^3=(0,-\frac{1}{4})$ and $\c ^4=(-\frac{1}{4},-\frac{1}{4})$. Note how increasing the number of zonotope generators makes the function more complex. Right: A \textit{harder} function from $\operatorname{ZONOTOPE}_{3,2,4}^{2}$ family with the same set of generators $\c_1,\c_2,\c_3,c_4$ but one more hidden layer $(k=3)$. Note how increasing the depth make the function more complex. (For illustrative purposes we plot only the part of the function which lies above zero.)}
\label{fig:fig}
\end{figure}

\begin{definition}\label{def:hardfunc} 
For $p \in \N$ and $\a \in \Delta^p_M$, we define a function $h_{\a}:\R \to \R$ which is piecewise linear over the segments $(-\infty, 0], [0, \a_1], [\a_1, \a_2], \ldots, [\a_p,M], [M, +\infty)$ defined as follows: $h_{\a}(x) = 0$ for all $x \leq 0$, $h_{\a}(\a_i) = M(i \mod 2)$, and $h_{\a}(M) = M-h_{\a}(\a_p)$ and for $x \geq M$, $h_\a(x)$ is a linear continuation of the piece over the interval $[\a_p,M]$. Note that the function has $p+2$ pieces, with the leftmost piece having slope $0$. Furthermore, for $\a^1, \ldots, \a^k \in \Delta^p_M$, we denote the composition of the functions $h_{\a^1}, h_{\a^2}, \ldots, h_{\a^k}$ by $$ H_{\a^1, \ldots, \a^k} := h_{\a^k} \circ h_{\a^{k-1}}\circ \ldots \circ h_{\a^1}.$$
\end{definition}

\begin{prop}\label{prop:zonotope-main} Given any tuple $(\b^1, \ldots, \b^m) \in S(n,m)$ and any point $$(\a^1, \ldots, \a^k) \in \bigcup_{M > 0}\underbrace{(\Delta^{w-1}_M \times \Delta^{w-1}_M\times \ldots \times \Delta^{w-1}_M)}_{k \textrm{ times}}, \vspace{-1mm}$$ the function $\operatorname{ZONOTOPE}_{k,w,m}^{n}[\a^1, \ldots, \a^k, \b^1, \ldots, \b^m] := H_{\a^1, \ldots, \a^k}\circ\gamma_{Z(\b^1, \ldots, \b^m)}$ has $(m-1)^{n-1}w^k$ pieces and it can be represented by a $k+2$ layer ReLU DNN with size $2m + wk$.
\end{prop}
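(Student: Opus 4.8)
The plan is to establish the claim in two independent parts: counting the number of affine pieces, and bounding the size of a ReLU DNN that represents $\operatorname{ZONOTOPE}_{k,w,m}^{n}[\a^1, \ldots, \a^k, \b^1, \ldots, \b^m]$. The function in question is the composition $H_{\a^1, \ldots, \a^k}\circ\gamma_{Z(\b^1, \ldots, \b^m)}$, so both parts should follow from understanding how composition interacts with the piece counts and DNN-representations of the two factors $\gamma_{Z(\b^1, \ldots, \b^m)}$ and $H_{\a^1, \ldots, \a^k}$.

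For the piece count, first I would observe that since $(\b^1, \ldots, \b^m) \in S(n,m)$, Definition~\ref{extremal} together with Theorem~\ref{thm:zonotope-struct} tells us that $\gamma_{Z(\b^1, \ldots, \b^m)}$ is a piecewise linear function $\R^n \to \R$ whose number of pieces is exactly $\sum_{i=0}^{n-1}{m-1 \choose i}$; I would want a convenient lower bound of the form $(m-1)^{n-1}$ on this quantity so that the final count matches the claim. The key structural fact I expect to need is that $\gamma_Z$ is the support function of a centrally symmetric convex body, so its affine pieces correspond to the faces of the zonotope and, crucially, along the ``radial'' structure of $\gamma_Z$ the image sweeps out an interval on which each of the $w$ pieces of each $h_{\a^i}$ can be activated. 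Concretely, each $h_{\a^i}$ has $w-1$ breakpoints inside $(0,M)$ arranged so that it oscillates between $0$ and $M$, hence it is a ``sawtooth''-type map that multiplies the number of pieces of any function fed into it by a factor of $w$ (this is the same mechanism underlying Theorem~\ref{thm:RELU_DNN_hard}, where $H_{\a^1,\ldots,\a^k}$ alone produces $w^k$ pieces). I would then argue that precomposing this $k$-fold sawtooth with $\gamma_Z$, which already has $(m-1)^{n-1}w^{0}$ ``base'' pieces in the relevant regime, yields $(m-1)^{n-1}w^k$ pieces in total, taking care that the oscillation of $H_{\a^1,\ldots,\a^k}$ genuinely refines each affine piece of $\gamma_Z$ rather than collapsing pieces together.

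For the size bound, the argument is a direct assembly. By Lemma~\ref{lem:zonotope-DNN}, the function $\gamma_{Z(\b^1, \ldots, \b^m)}$ is representable by a $2$-layer ReLU DNN of size $2m$. Each $h_{\a^i}$ is an $\R \to \R$ piecewise linear function with a bounded number of pieces, and from the one-dimensional representation results (Theorem~\ref{thm:1-dim-2-layer}, or directly from the construction of $h_{\a}$) each $h_{\a^i}$ is realizable by a single hidden layer of width $w$; composing the $k$ maps $h_{\a^1}, \ldots, h_{\a^k}$ in sequence and then composing the result with the zonotope network, using Lemma~\ref{lem:comp-DNN} for composition, gives a network with $k$ additional hidden layers of width $w$ stacked on top of the $2$-layer zonotope network. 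This yields total depth $k+2$ and total size $2m + wk$, as claimed.

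The main obstacle I anticipate is the piece-counting step, specifically showing that the composition achieves the product $(m-1)^{n-1}w^k$ rather than merely a smaller number. The delicate point is that $\gamma_Z(\rr)$ is a function on $\R^n$ whose pieces are polyhedral cones, while the sawtooth $H_{\a^1,\ldots,\a^k}$ acts on the scalar output; I must verify that the range of $\gamma_Z$ over each of its linear pieces (or over suitable refinements) covers an interval large enough that all $w^k$ oscillations of $H$ are triggered, and that distinct oscillations land in genuinely distinct regions of the domain $\R^n$. I expect this to hinge on a careful choice of normalization (the parameter $M$ and the homogeneity of the support function) so that the breakpoints of $H_{\a^1,\ldots,\a^k}$ pull back to $(m-1)^{n-1}(w^k-1)$ new breakpoints that subdivide the pieces of $\gamma_Z$, and this is where I would spend most of the proof's technical effort, likely by induction on $k$ mirroring the inductive piece-multiplication argument used for $H_{\a^1,\ldots,\a^k}$ in Theorem~\ref{thm:RELU_DNN_hard}.
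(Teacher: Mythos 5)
Your overall route is the same as the paper's. The representability half is assembled exactly as in the paper's proof: Lemma~\ref{lem:zonotope-DNN} gives a $2$-layer net of size $2m$ for $\gamma_{Z(\b^1,\ldots,\b^m)}$, each $h_{\a^i}$ takes one hidden layer of width $w$ (though to get size exactly $w$ rather than $w+1$ you should invoke Corollary~\ref{cor:tighter-bound-2-layer}, using that the leftmost piece of $h_{\a^i}$ has slope $0$, rather than Theorem~\ref{thm:1-dim-2-layer}), and Lemma~\ref{lem:comp-DNN} stacks everything into depth $k+2$ and size $2m+wk$. The piece count is likewise obtained, as in the paper, by multiplying the $w^k$ pieces of $H_{\a^1,\ldots,\a^k}$ (Lemma~\ref{lem:construct-hard}) by the piece count of $\gamma_Z$. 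Your worry about whether the composition genuinely multiplies pieces is legitimate --- the paper asserts the product in a single sentence --- and your homogeneity instinct is the right resolution: $\gamma_Z$ is positively homogeneous, so each of its linear pieces is a full-dimensional cone on which $\gamma_Z$ surjects onto $[0,\infty)\supseteq[0,M]$, hence all $w^k$ pieces of $H_{\a^1,\ldots,\a^k}$ in $[0,M]$ are realized inside every cone, and pieces do not merge across cone boundaries generically.

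There is, however, one step in your plan that fails and cannot be repaired: you propose to pass from the exact count $\sum_{i=0}^{n-1}{m-1 \choose i}$ of pieces of $\gamma_Z$ (valid for $(\b^1,\ldots,\b^m)\in S(n,m)$ by parts 1 and 2 of Theorem~\ref{thm:zonotope-struct}) to the claimed $(m-1)^{n-1}$ via ``a convenient lower bound.'' The inequality $\sum_{i=0}^{n-1}{m-1 \choose i}\geq (m-1)^{n-1}$ is false for $n\geq 3$: for $n=3$, $m=11$ the left side is $1+10+45=56$ while the right side is $100$, and asymptotically the sum grows like $m^{n-1}/(n-1)!$, strictly slower than $(m-1)^{n-1}$. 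So your argument cannot certify $(m-1)^{n-1}w^k$ pieces. You should not fault your method here: this is an inconsistency in the paper itself. The paper's own proof of Proposition~\ref{prop:zonotope-main} derives exactly the product $\left(\sum_{i=0}^{n-1}{m-1 \choose i}\right)w^k$, which is the count stated in Theorem~\ref{thm:high-to-low-n}(i); the figure $(m-1)^{n-1}w^k$ in the proposition's statement appears to be a leftover from an earlier version (compare the acknowledgments, which mention a correction to the linear-region counts in Section~\ref{sec:zon}). The right fix is to prove the sum-form count, which your argument already naturally yields, rather than to hunt for an inequality bridging it to $(m-1)^{n-1}$.
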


Finally, we are ready to state the main result of this section.
\begin{theorem}\label{thm:high-to-low-n} For every tuple of natural numbers $n,k,m \geq 1$ and $w \geq 2$, there exists a family of $\R^n\to \R$ functions, which we call $\operatorname{ZONOTOPE}^n_{k,w,m}$ with the following properties:
\begin{enumerate}
\item[(i)] Every $f \in \operatorname{ZONOTOPE}^n_{k,w,m}$ is representable by a ReLU DNN of depth $k+2$ and size $2m + wk$, and has  {$\left ( \sum_{i=0}^{n-1}{m-1 \choose i} \right ) w^k$ pieces}.
\item[(ii)] Consider any $f \in \operatorname{ZONOTOPE}^n_{k,w,m}$. If $f$ is represented by a $(k'+1)$-layer DNN for any $k' \leq k$, then this $(k'+1)$-layer DNN has size at least $\max\left\{\frac12(k'w^{\frac{k}{k'n}})\cdot(m-1)^{(1-\frac{1}{n})\frac{1}{k'}} - 1\;\;, \;\;\frac{w^{\frac{k}{k'}}}{n^{1/k'}}k'\right\} \vspace{-2mm}$. 
\item[(iii)] The family $\operatorname{ZONOTOPE}^n_{k,w,m}$ is in one-to-one correspondence with $$S(n,m)\times \bigcup_{M > 0}\underbrace{(\Delta^{w-1}_M \times \Delta^{w-1}_M\times \ldots \times \Delta^{w-1}_M)}_{k \textrm{ times}}. \vspace{-3mm}$$
\end{enumerate}
\end{theorem}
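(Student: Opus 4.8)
The plan is to \emph{define} the family $\operatorname{ZONOTOPE}^n_{k,w,m}$ as the image of the parametrization map
$$\Phi\colon (\b^1,\ldots,\b^m,\a^1,\ldots,\a^k)\longmapsto H_{\a^1,\ldots,\a^k}\circ\gamma_{Z(\b^1,\ldots,\b^m)}$$
as the arguments range over $S(n,m)\times\bigcup_{M>0}(\Delta^{w-1}_M\times\cdots\times\Delta^{w-1}_M)$. With this definition the surjectivity half of part~(iii) is automatic, and the representability claim in part~(i) (depth $k+2$, size $2m+wk$) is exactly Proposition~\ref{prop:zonotope-main}. So the genuine content is the piece count of~(i), the size lower bound of~(ii), and the injectivity of $\Phi$ for~(iii), which I would treat in that order.

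For the piece count I would combine two ingredients. By Theorem~\ref{thm:zonotope-struct}(1)--(2), for $(\b^1,\ldots,\b^m)\in S(n,m)$ (Definition~\ref{extremal}) the support function $\gamma_{Z(\b^1,\ldots,\b^m)}$ is PWL with \emph{exactly} $\sum_{i=0}^{n-1}\binom{m-1}{i}$ affine pieces, one for the normal cone at each vertex of the zonotope. On each such cone $\gamma_Z$ is a single affine form, and by Definition~\ref{def:hardfunc} the outer sawtooth $H_{\a^1,\ldots,\a^k}$ oscillates between $0$ and $M$ with $w^k$ monotone pieces over $[0,M]$; precomposing it subdivides each cone into at most $w^k$ slabs cut out by the level sets of that affine form, and for a generic tuple in $S(n,m)$ all of these are realized. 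Tracking this cone-by-cone multiplication yields the claimed $\bigl(\sum_{i=0}^{n-1}\binom{m-1}{i}\bigr)w^k$ pieces.

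The lower bound in~(ii) is the main obstacle, and I would obtain it by inverting piece-counting \emph{upper} bounds for ReLU DNNs against the count from~(i). The key analytic lemma is that a $(k'+1)$-layer ReLU DNN $\R^n\to\R$ with hidden widths $u_1,\ldots,u_{k'}$ summing to $s$ produces at most a product $\prod_{i=1}^{k'} g_n(u_i)$ of affine pieces, where one layer of $u_i$ units contributes at most $g_n(u_i)$ regions via a hyperplane-arrangement bound in dimension $n$. Maximizing this product subject to $\sum_i u_i=s$ by the AM--GM inequality forces $u_i=s/k'$ and gives an envelope of the form $\bigl(2(s/k'+1)\bigr)^{k'n}$; setting this $\ge$ the piece count and solving for $s$ produces the first term $\tfrac12 k'w^{k/(k'n)}(m-1)^{(1-1/n)/k'}-1$, using the identity $(n-1)/(k'n)=(1-1/n)/k'$. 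The second term $\tfrac{w^{k/k'}}{n^{1/k'}}k'$ I would get from a complementary, coarser bound that isolates the sawtooth complexity embedded in the network: restricting $f$ to a generic line lying in a single normal cone of $\gamma_Z$ makes $\gamma_Z$ affine there, so $f$ restricts to an $\R\to\R$ sawtooth with $w^k$ pieces computed by the same network read as an $\R\to\R$ DNN, whence the one-dimensional piece bound behind Theorem~\ref{thm:high-to-low} applies; bounding the pieces by $n\prod_i u_i\le n(s/k')^{k'}$ and inverting against the factor $w^k$ alone gives this second envelope. Taking the larger of the two lower bounds yields the stated $\max$.

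Finally, for injectivity in~(iii) I would recover the parameters from $f=H_{\a^1,\ldots,\a^k}\circ\gamma_{Z(\b^1,\ldots,\b^m)}$. By Theorem~\ref{thm:zonotope-struct}(3) one has $\gamma_{Z(\b^1,\ldots,\b^m)}(\rr)=\sum_{i=1}^m|\langle\rr,\b^i\rangle|$, whose loci of non-differentiability are precisely the hyperplanes $\langle\rr,\b^i\rangle=0$; these recover the generators and hence the point of $S(n,m)$, after which the ordered breakpoints of the outer sawtooth read off along a fixed ray recover $(\a^1,\ldots,\a^k)$ together with the scale $M$. The delicate point I would flag is that this reconstruction is canonical only modulo the symmetries that leave $\Phi$ invariant (sign flips and relabeling of the generators), so the one-to-one correspondence is to be read as faithfulness of the parametrization in the sense of Theorem~\ref{thm:RELU_DNN_hard} --- distinct admissible tuples give distinct functions --- rather than literal injectivity before quotienting by these inherent symmetries.
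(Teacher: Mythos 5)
Your overall architecture coincides with the paper's: the paper's entire proof of this theorem is the single line ``Follows from Proposition~\ref{prop:zonotope-main}'', and your treatment of part~(i) (representability via Lemmas~\ref{lem:zonotope-DNN} and~\ref{lem:comp-DNN}, piece count by multiplying the $\sum_{i=0}^{n-1}\binom{m-1}{i}$ cones of $\gamma_Z$ from Theorem~\ref{thm:zonotope-struct}(1)--(2) against the $w^k$ sawtooth pieces from Lemma~\ref{lem:construct-hard}) and of part~(iii) (faithfulness of the parametrization, read in the same sense as Theorem~\ref{thm:RELU_DNN_hard}) is exactly the content of that proposition and its proof. For part~(ii), which the paper never derives explicitly, your first term is the right reconstruction in spirit: inverting an $n$-dimensional analogue of Lemma~\ref{lem:size-bounds-0} (layerwise hyperplane-arrangement growth, then AM--GM as in Lemma~\ref{lem:size-bounds}). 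One caveat: the exponents you recover, $w^{k/(k'n)}(m-1)^{(n-1)/(k'n)}$, close arithmetically against the count $(m-1)^{n-1}w^k$ printed in the statement of Proposition~\ref{prop:zonotope-main}, not against the binomial-sum count in part~(i); since for instance $n=3$, $m=4$ gives $\sum_{i=0}^{2}\binom{3}{i}=7<9=(m-1)^{n-1}$, the step ``set the envelope at least the piece count from (i) and solve for $s$'' does not literally produce the stated first term in all regimes --- you would get $\bigl(\sum_{i=0}^{n-1}\binom{m-1}{i}\bigr)^{1/(k'n)}$ instead. This discrepancy is inherited from the paper itself, so it is worth flagging rather than penalizing.

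The genuine gap is your derivation of the second term. The bound ``pieces $\le n\prod_i u_i$'' that you invoke appears nowhere in the paper and is false as a bound on the number of affine pieces of an $\R^n\to\R$ ReLU net: already a single hidden layer of $u$ units can cut $\R^n$ into $\sum_{j=0}^{n}\binom{u}{j}=\Theta(u^n)$ regions, far exceeding $nu$. If instead you carry through your (sound) line-restriction idea --- a line inside one normal cone, on which $f$ restricts to a $w^k$-piece one-dimensional sawtooth computed by the same hidden widths read as an $\R\to\R$ net --- then Lemmas~\ref{lem:size-bounds-0} and~\ref{lem:size-bounds} give $w^k \le (2s/k')^{k'}$, hence $s \ge \tfrac12 k' w^{k/k'}$. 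But $\tfrac12 k' w^{k/k'} \ge k' w^{k/k'} n^{-1/k'}$ holds only when $n \ge 2^{k'}$; for $n < 2^{k'}$ this route proves something strictly weaker than the claimed second envelope $\tfrac{w^{k/k'}}{n^{1/k'}}k'$. So as written your second term is asserted rather than proved: you need either a piece-count upper bound of the genuinely required form (which neither you nor the paper supplies), or to replace the second term by the $\tfrac12 k' w^{k/k'} - 1$ that the restriction argument actually yields. A smaller point: your piece count in (i) hedges with ``for a generic tuple in $S(n,m)$ all of these are realized'', whereas the theorem asserts the exact count for every member of the family; since $S(n,m)$ (Definition~\ref{extremal}) is already the extremal set, you should argue realization for all tuples in $S(n,m)$, not merely generic ones.
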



\paragraph{}
~\\
 
\paragraph{Comparison to the results in~\citep{Mon}}~\\

\emph{Firstly} we note that the construction in~\citep{Mon} requires all the hidden layers to have width at least as big as the input dimensionality $n$. In contrast, we do not impose such restrictions and the network size in our construction is independent of the input dimensionality. Thus our result probes networks with bottleneck architectures whose complexity cant be seen from their result. 



\emph{Secondly}, in terms of our complexity measure, there seem to be regimes where our bound does better. One such regime, for example, is when $n \leq w < 2n$ and $k \in \Omega(\frac{n}{\log(n)})$, by setting in our construction $m < n$. 

\emph{Thirdly}, it is not clear to us whether the construction in~\citep{Mon} gives a smoothly parameterized family of functions other than by introducing small perturbations of the construction in their paper. In contrast, we have a smoothly parameterized family which is in one-to-one correspondence with a well-understood manifold like the higher-dimensional torus.

\section {Training 2-layer $\mathbb{R}^n \rightarrow \mathbb{R}$ ReLU DNNs to global optimality}\label{sec:training}
In this section we consider the following empirical risk minimization problem. 
Given $D$ data points $(x_i, y_i) \in \R^n \times \R$, $i =1, \ldots, D$, find the function $f$ represented by 2-layer $\R^n\to \R$ ReLU DNNs of width $w$, that minimizes the following optimization problem \vspace{-2mm}
\begin{equation}\label{eq:train}
\min_{f\in \cF_{\{ n,w,1 \}}}{\frac1D \sum_{i=1}^D\ell(f(x_i), y_i)} \quad \equiv \min_{T_1 \in \cA_n^w, \ T_2 \in \cL_w^1}{\frac1D \sum_{i=1}^D \ell\big(\;T_2(\sigma(T_1(x_i))), y_i\;\big)}
\end{equation}
where $\ell:\R \times \R \to \R$ is a convex {\em loss function} (common loss functions are the squared loss, $\ell(y,y') = (y-y')^2$, and the hinge loss function given by $\ell(y,y')=\max\{0,1-yy'\}$). Our main result of this section gives an algorithm to solve the above empirical risk minimization problem to global optimality.

\begin{theorem}\label{thm:optimal-training} There exists an algorithm to find a global optimum of Problem~\ref{eq:train} 
in time $O(2^w(D)^{nw}\poly(D,n,w))$. 
Note that the running time $O(2^w(D)^{nw}\poly(D,n,w))$ is polynomial in the data size $D$ for fixed $n,w$.
\end{theorem}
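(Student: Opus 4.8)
The plan is to reduce the non-convex problem~\eqref{eq:train} to solving a finite (though large) collection of convex programs --- one for each combination of a ``sign pattern'' of the top layer with an ``activation pattern'' of the hidden ReLUs --- and then to argue that the global optimum must coincide with the best value found among these subproblems. Writing $T_1$ as rows $(\w_j, b_j)_{j=1}^w$ and $T_2$ as coefficients $(a_j)_{j=1}^w$, the network computes $f(\x) = \sum_{j=1}^w a_j\,\sigma(\langle \w_j, \x\rangle + b_j)$. The first and most important step is to eliminate the bilinearity between the top weights $a_j$ and the hidden weights: since $\sigma$ is positively homogeneous, $a_j\,\sigma(z) = \operatorname{sign}(a_j)\,\sigma(|a_j|\,z)$, so absorbing $|a_j|$ into $(\w_j,b_j)$ reparametrizes every $2$-layer width-$w$ ReLU network as $f(\x) = \sum_{j=1}^w s_j\,\sigma(\langle \tilde\w_j, \x\rangle + \tilde b_j)$ with $s_j \in \{-1,+1\}$ and free $(\tilde\w_j,\tilde b_j) \in \R^{n+1}$. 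This replaces the continuous top layer by a choice among $2^w$ sign vectors, which is the source of the $2^w$ factor in the running time.

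Next, for a fixed sign vector, I would partition the remaining parameter space $(\R^{n+1})^w$ by activation patterns. For each hidden unit $j$ the $D$ data points split according to whether $\langle \tilde\w_j, \x_i\rangle + \tilde b_j$ is nonnegative or negative; let $A_j \subseteq \{1,\ldots,D\}$ collect the ``active'' indices. Once all the $A_j$ are fixed, each ReLU acts linearly on its active points and vanishes on the rest, so on the polyhedron cut out by the linear inequalities $\langle \tilde\w_j, \x_i\rangle + \tilde b_j \ge 0$ for $i \in A_j$ and $\langle \tilde\w_j, \x_i\rangle + \tilde b_j \le 0$ for $i \notin A_j$, the network value at $\x_i$ equals the affine expression $\sum_{j:\,i \in A_j} s_j\big(\langle \tilde\w_j, \x_i\rangle + \tilde b_j\big)$ in the decision variables. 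To count the patterns I would invoke the theory of hyperplane arrangements: the realizable sign vectors $\big(\operatorname{sign}(\langle \tilde\w_j, \x_i\rangle + \tilde b_j)\big)_{i=1}^D$ are exactly the cells of an arrangement of $D$ hyperplanes \emph{through the origin} in $\R^{n+1}$ (the forms $\theta \mapsto \langle (\x_i,1),\theta\rangle$ carry no constant term), of which there are only $O(D^n)$; enumerating them independently across the $w$ units gives at most $D^{nw}$ combined patterns, each enumerable within the time budget.

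Then I would observe that for each fixed pair (sign vector, activation pattern) the subproblem is convex: the per-example term is $\ell$, convex in its first argument by hypothesis, composed with an affine function of the decision variables, hence convex, and the feasible set is the polyhedron above. This convex program has $w(n+1)$ variables and $O(Dw)$ constraints and is solvable to global optimality in $\poly(D,n,w)$ time. Assembling the algorithm, I enumerate the $2^w$ sign vectors and, for each, the $D^{nw}$ activation patterns, solve the associated convex program, and return the best solution, for a total running time of $O(2^w D^{nw}\poly(D,n,w))$. Correctness follows because every pair $(T_1,T_2)$ induces some sign vector and some activation pattern --- using closed inequalities so that points exactly on a boundary may be assigned to either side --- so the union of the polyhedra covers the whole parameter space and the minimum over all patterns equals the true global minimum.

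The main idea that makes everything work, rather than a genuine obstacle, is the positive-homogeneity reparametrization that converts the bilinear top-layer weights into a finite sign choice; without it the per-pattern subproblem would contain products $a_j\langle \tilde\w_j, \x_i\rangle$ and would not be convex. The second point requiring care is the arrangement count: it is essential that the hyperplanes pass through the origin in $\R^{n+1}$, which yields $O(D^n)$ rather than $O(D^{n+1})$ patterns per unit, and hence the stated $D^{nw}$ rather than $D^{(n+1)w}$ overall.
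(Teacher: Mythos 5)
Your proposal is correct and takes essentially the same route as the paper's proof: the positive-homogeneity reparametrization that reduces the top layer to $2^w$ sign vectors, enumeration of the at most $D^{nw}$ hyperplane-induced activation patterns (partitions $(P^i_+,P^i_-)$ of the data), and one convex program per guess, with the minimum over all guesses giving the global optimum. The only cosmetic difference is that you count patterns via cells of a central hyperplane arrangement in $\R^{n+1}$ rather than citing the bound $2\binom{D}{n}\le D^n$ on hyperplane partitions of $D$ points in $\R^n$, which incidentally lets your argument cover $n=1$ uniformly, whereas the paper treats $n=1$ by a separate breakpoint-enumeration algorithm.
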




\noindent{\bf Proof Sketch:} A full proof of Theorem~\ref{thm:optimal-training} is included in Appendix~\ref{sec:erm}. Here we provide a sketch of the proof. When the empirical risk minimization problem is viewed as an optimization problem in the space of weights of the ReLU DNN, it is a nonconvex, quadratic problem. 
However, one can instead search over the space of functions representable by 2-layer DNNs by writing them in the form similar to~\eqref{eq:hinged-hyperplane}. This breaks the problem into two parts: a combinatorial search and then a convex problem that is essentially linear regression with linear inequality constraints. This enables us to guarantee global optimality. 


  \begin{algorithm}
   \caption{Empirical Risk Minimization\label{alg:erm}}
    \begin{algorithmic}[1]
{\small \tt 
      \Function{ERM}{$\cD$}\Comment{\textrm{Where $\cD=\{ (x_i,y_i) \}_{i=1}^D \subset \R^n\times \R$}}

        \State $\cS=\{ +1, -1 \}^w$\Comment{\textrm{All possible instantiations of top layer weights}}
        \State $\cP^i = \{ (P_+^i, P_-^i ) \}, \ i=1,\ldots,w$\Comment{\textrm{All possible partitions of data into two parts}}
        \State $\cP = \cP^1\times \cP^2 \times \cdots \times \cP^w$
        \State $\text{count} = 1$\Comment{\textrm{Counter}}
        \For{$s \in \cS$}
            \For{$\{(P_+^i, P_-^i )\}_{i=1}^w \in \cP$}
               \State$\text{loss(count)} = \left\{\minimize{\tilde{a},\tilde{b}}{\sum_{j=1}^D \sum_{i: j \in P^i_+} \ell(s_i(\tilde a^i\cdot x_j + \tilde b_i), y_j)}{\begin{array}{c}
\tilde a^i\cdot x_j + \tilde b_i \leq 0\quad \forall j \in P^i_-\\
\tilde a^i\cdot x_j + \tilde b_i \geq 0\quad \forall j \in P^i_+
\end{array}}\right.$
\State $\text{count} ++$
        	\EndFor
            \State $\text{OPT}=\argmin{\text{loss(count)}}$
        \EndFor
        \State {return $\{ \tilde\a \}, \{ \tilde\b \}, s$ corresponding to OPT's iterate}
       \EndFunction
}
\end{algorithmic}
\end{algorithm}

Let $T_1(x) = Ax + b$ and $T_2(y) = a'\cdot y$ for $A \in \R^{w\times n}$ and $b,a' \in \R^w$. If we denote the $i$-th row of the matrix $A$ by $a^i$, and write $b_i, a'_i$ to denote the $i$-th coordinates of the vectors $b,a'$ respectively, due to homogeneity of ReLU gates, the network output can be represented as $$f(x) = \sum_{i=1}^w a'_i\max\{0,a^i\cdot x + b_i\}=\sum_{i=1}^w s_i\max\{0,\tilde a^i\cdot x + \tilde b_i\}.$$ 
where $\tilde a^i \in \R^n$, $\tilde b_i \in \R$ and $\s_i \in \{-1,+1\}$ for all $i=1, \ldots, w$. 
For any hidden node $i \in \{ 1 \ldots, w\}$, the pair $(\tilde a^i, \tilde b_i)$ induces a partition $\cP^i:=(P^i_+, P^i_-)$ on the dataset, given by $P^i_- = \{j : \tilde{a}^i \cdot x_j + \tilde{b_i} \leq 0 \}$ and $P^i_+ = \{1,\ldots,D  \} \backslash P^i_-$. Algorithm~\ref{alg:erm} proceeds by generating all combinations of the partitions $\cP^i$ as well as the top layer weights $s \in \{+1,-1\}^w$, and minimizing the loss $\sum_{j=1}^D \sum_{i: j \in P^i_+} \ell(s_i(\tilde a^i\cdot x_j + \tilde b_i), y_j)$ subject to the constraints $\tilde a^i\cdot x_j + \tilde b_i \leq 0\quad \forall j \in P^i_-$ and $\tilde a^i\cdot x_j + \tilde b_i \geq 0\quad \forall j \in P^i_+$ which are imposed for all $i=1, \ldots, w$, 
which is a convex program.

Algorithm~\ref{alg:erm} implements the empirical risk minimization (ERM) rule for training ReLU DNN with one hidden layer. To the best of our knowledge there is no other known algorithm that solves the ERM problem to global optimality. We note that due to known hardness results exponential dependence on the input dimension is unavoidable~\cite{blum1992training,shalev2014understanding}; Algorithm~\ref{alg:erm} runs in time polynomial in the number of data points. 
To the best of our knowledge there is no hardness result known which rules out empirical risk minimization of deep nets in time polynomial in circuit size or data size. Thus our training result is a step towards resolving this gap in the complexity literature. 

A related result for {\em improperly} learning ReLUs has been recently obtained by Goel et al~\citep{goel2016reliably}. In contrast, our algorithm returns a ReLU DNN from the class being learned. Another difference is that their result considers the notion of {\em reliable learning} as opposed to the empirical risk minimization objective considered in~\eqref{eq:train}.\vspace{-2mm}

\section{Discussion}\vspace{-2mm}

The running time of the algorithm that we give in this work to find the exact global minima of a two layer ReLU-DNN is exponential in the input dimension $n$ and the number of hidden nodes $w$. The exponential dependence on $n$ can not be removed unless $P=NP$; see
~\cite{shalev2014understanding,blum1992training,dasgupta1995complexity}.
However, we are not aware of any complexity results which would rule out the possibility of an algorithm which trains to global optimality in time that is polynomial in the data size and/or the number of hidden nodes, assuming that the input dimension is a fixed constant.
Resolving this dependence on network size would be another step towards clarifying the theoretical complexity of training ReLU DNNs and is a good open question for future research, in our opinion. Perhaps an even better breakthrough would be to get optimal training algorithms for DNNs with two or more hidden layers and this seems like a substantially harder nut to crack. It would also be a significant breakthrough to get gap results between consecutive constant depths or between logarithmic and constant depths. 

\subsubsection*{Acknowledgments}
We would like to thank Christian Tjandraatmadja for pointing out a subtle error in a previous version of the paper, which affected the complexity results for the number of linear regions in our constructions in Section~\ref{sec:zon}. Anirbit would like to thank Ramprasad Saptharishi, Piyush Srivastava and Rohit Gurjar for extensive discussions on Boolean and arithmetic circuit complexity. This paper has been immensely influenced by the perspectives gained during those extremely helpful discussions. Amitabh Basu gratefully acknowledges support from the NSF grant CMMI1452820. Raman Arora was supported in part by NSF BIGDATA grant IIS-1546482. 

\bibliographystyle{iclr2018_conference}
\bibliography{references}

\appendix

\section{Expressing piecewise linear functions using ReLU DNNs}\label{sec:thm-1-rep}




\begin{proof}[Proof of Theorem \ref{thm:1-dim-2-layer}]
Any continuous piecewise linear function $\mathbb{R} \rightarrow \mathbb{R}$ which has $m$ pieces can be specified by three pieces of information, $(1)$ $s_L$ the slope of the left most piece, $(2)$ the coordinates of the non-differentiable points specified by a $(m-1)-$tuple $\{ (a_i,b_i) \}_{i=1}^{m-1}$ (indexed from left to right) and $(3)$ $s_R$ the slope of the rightmost piece. A tuple $(s_L, s_R, (a_1,b_1), \ldots, (a_{m-1},b_{m-1})$ uniquely specifies a $m$ piecewise linear function from $\mathbb{R} \rightarrow \mathbb{R}$ and vice versa. Given such a tuple, we construct a $2$-layer DNN which computes the same piecewise linear function.

One notes that for any $a, r \in \mathbb{R}$, the function  
\begin{eqnarray}\label{eq:right-flap}
f(x) = \begin{cases} 
      0 & x\leq a \\
      r(x-a) & x>a
   \end{cases}
\end{eqnarray}
is equal to $\operatorname{sgn}(r)\max\{\vert r \vert (x-a),0\}$, which can be implemented by a 2-layer ReLU DNN with size 1. Similarly, any function of the form, 
\begin{eqnarray}\label{eq:left-flap}
g(x) = \begin{cases} 
      t(x-a) & x \leq a \\
      0 & x>a
   \end{cases}
\end{eqnarray}
is equal to $-\operatorname{sgn}(t)\max\{ -\vert t \vert (x-a), 0\}$, which can be implemented by a 2-layer ReLU DNN with size 1. The parameters $r,t$ will be called the {\em slopes} of the function, and $a$ will be called the {\em breakpoint} of the function.
If we can write the given piecewise linear function as a sum of $m$ functions of the form~\eqref{eq:right-flap} and~\eqref{eq:left-flap}, then by Lemma~\ref{lem:add-DNN} we would be done.
It turns out that such a decomposition of any $p$ piece PWL function $h:\R\to \R$ as a sum of $p$ flaps can always be arranged where the breakpoints of the $p$ flaps all are all contained in the $p-1$ breakpoints of $h$. First, observe that adding a constant to a function does not change the complexity of the ReLU DNN expressing it, since this corresponds to a bias on the output node. Thus, we will assume that the value of $h$ at the last break point $a_{m-1}$ is $b_{m-1} = 0$. 
We now use a single function $f$ of the form~\eqref{eq:right-flap} with slope $r$ and breakpoint $a = a_{m-1}$, and $m-1$ functions $g_1, \ldots, g_{m-1}$ of the form~\eqref{eq:left-flap} with slopes $t_1, \ldots, t_{m-1}$ and breakpoints  $a_1, \ldots, a_{m-1}$, respectively.
Thus, we wish to express $h = f + g_1 + \ldots + g_{m-1}$.
Such a decomposition of $h$ would be valid if we can find values for $r, t_1, \ldots, t_{m-1}$ such that $(1)$ the slope of the above sum is $= s_L$ for $x < a_1$, $(2)$ the slope of the above sum is $=s_R$ for $x>a_{m-1}$, and $(3)$ for each $i \in \{1,2,3,..,m-1\}$ we have $b_i = f(a_i) + g_1(a_i) + \ldots + g_{m-1}(a_i)$.
~\\
The above corresponds to asking for the existence of a solution to the following set of simultaneous linear equations in $r, t_1, \ldots, t_{m-1}$: 
$$s_R = r,\;\;s_L = t_1 + t_2 + \ldots + t_{m-1},\;\; b_i = \sum_{j=i+1}^{m-1}t_j (a_{j-1} - a_j) \textrm{ for all } i=1, \ldots, m-2
$$
It is easy to verify that the above set of simultaneous linear equations has a unique solution. Indeed, $r$ must equal $s_R$, and then one can solve for $t_1, \ldots, t_{m-1}$ starting from the last equation $b_{m-2} = t_{m-1}(a_{m-2} - a_{m-1})$ and then back substitute to compute $t_{m-2}, t_{m-3}, \ldots, t_1$. 
The lower bound of $p-1$ on the size for any $2$-layer ReLU DNN that expresses a $p$ piece function follows from Lemma~\ref{lem:size-bounds}.
\end{proof}

One can do better in terms of size when the rightmost piece of the given function is flat, i.e., $s_R = 0$. 
In this case $r =0$, which means that $f= 0$; thus, the decomposition of $h$ above is of size $p-1$. A similar construction can be done when $s_L = 0$. 
This gives the following statement which will be useful for constructing our forthcoming hard functions. 

\begin{corollary}\label{cor:tighter-bound-2-layer} If the rightmost or leftmost piece of a $\R \to \R$ piecewise linear function has $0$ slope, then we can compute such a $p$ piece function using a $2$-layer DNN with size $p-1$. 
\end{corollary}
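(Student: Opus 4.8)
The plan is to read this off directly from the decomposition constructed in the proof of Theorem~\ref{thm:1-dim-2-layer}, where any $p$-piece PWL function $h\colon\R\to\R$ is written as $h = f + g_1 + \ldots + g_{p-1}$, with $f$ a single right flap of the form~\eqref{eq:right-flap} having slope $r$ and breakpoint $a_{p-1}$, and $g_1,\ldots,g_{p-1}$ left flaps of the form~\eqref{eq:left-flap}. Each of these $p$ flaps is realized by a size-$1$ two-layer ReLU DNN, and by Lemma~\ref{lem:add-DNN} the total size of a network computing their sum is the number of flaps that are not identically zero. The key structural fact I would exploit is that the unique solution of the slope equations in that proof forces $r = s_R$; hence the size of the right flap is governed exactly by the rightmost slope of $h$.

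First I would handle the case $s_R = 0$. Since $r = s_R = 0$, the right flap $f$ is identically $0$ and may simply be deleted from the decomposition. Because the original identity $h = f + g_1 + \ldots + g_{p-1}$ is exact and $f \equiv 0$, we obtain $h = g_1 + \ldots + g_{p-1}$ with no change to any of the interpolation values $b_i$. Thus $h$ is a sum of only $p-1$ left flaps, each of size $1$, and Lemma~\ref{lem:add-DNN} yields a two-layer ReLU DNN of size $p-1$ representing $h$.

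Next I would treat the case $s_L = 0$ by the mirror-image version of the same construction: use a single left flap of the form~\eqref{eq:left-flap} to set the leftmost slope, together with $p-1$ right flaps of the form~\eqref{eq:right-flap} whose breakpoints are placed at the $p-1$ breakpoints of $h$. By the symmetric slope analysis the single left flap has slope equal to $s_L = 0$ and so vanishes, leaving $p-1$ right flaps and hence a network of size $p-1$. To keep this airtight I would state the left--right reflection explicitly (replacing $x$ by $-x$ reduces the $s_L=0$ case to the $s_R=0$ case), so that no separate computation is needed.

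I do not expect a genuine obstacle here, since the statement is an immediate specialization of Theorem~\ref{thm:1-dim-2-layer}. The only point requiring a word of care is the bookkeeping in the $s_L = 0$ case: one must confirm that it is precisely the flap carrying the vanishing extremal slope that drops out of the decomposition, and that deleting it leaves an \emph{exact} representation of $h$ rather than merely an approximation. Both follow because the decomposition in the proof of Theorem~\ref{thm:1-dim-2-layer} is an identity and the deleted term is identically zero, so the argument reduces to the symmetry observation above.
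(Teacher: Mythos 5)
Your proposal is correct and takes essentially the same route as the paper: the paper likewise observes that in the decomposition $h = f + g_1 + \cdots + g_{p-1}$ from the proof of Theorem~\ref{thm:1-dim-2-layer}, the forced equality $r = s_R$ makes the right flap identically zero when $s_R = 0$, leaving $p-1$ size-one flaps and hence a $2$-layer DNN of size $p-1$, and it disposes of the $s_L = 0$ case by noting ``a similar construction can be done.'' Your explicit reflection argument (and your observation that in the $s_L=0$ case one must use the mirrored decomposition, since the original one spreads $s_L$ over all left flaps) merely spells out the symmetry the paper leaves implicit.
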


\begin{proof}[Proof of theorem~\ref{thm:Lpapprox}]
Since any piecewise linear function $\R^n \to \R$ is representable by a ReLU DNN by Corollary~\ref{cor:all-pwl-are-dnn}, the proof simply follows from the fact that the family of continuous piecewise linear functions is dense in any $L^p(\R^n)$ space, for $1\leq p \leq \infty$.
\end{proof}


\section{Benefits of Depth}\label{sec:benefits}
\subsection{Constructing a continuum of hard functions for $\mathbb{R} \rightarrow \mathbb{R}$ ReLU DNNs at every depth and every width}



\begin{lemma}\label{lem:construct-hard} For any $M >0$, $p \in \N$, $k \in \N$ and $\a^1, \ldots, \a^k \in \Delta^p_M$, if we compose the functions $h_{\a^1}, h_{\a^2}, \ldots, h_{\a^k}$ the resulting function is a piecewise linear function with at most $(p+1)^k + 2$ pieces, i.e., $$ H_{\a^1, \ldots, \a^k} := h_{\a^k} \circ h_{\a^{k-1}}\circ \ldots \circ h_{\a^1}$$ is piecewise linear with at most $(p+1)^k + 2$ pieces, with $(p+1)^k$ of these pieces in the range $[0,M]$ (see Figure~\ref{fig:sawtoothcomposition}). Moreover, in each piece in the range $[0,M]$, the function is affine with minimum value $0$ and maximum value $M$.
\end{lemma}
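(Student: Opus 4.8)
The plan is to prove the statement by induction on $k$, where the central invariant is that every ``active'' piece of the intermediate composition sweeps the \emph{entire} range $[0,M]$; this is exactly what forces precomposition with one more sawtooth to multiply the piece count by precisely $p+1$.

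First I would record the structure of a single building block $h_{\a}$ for $\a\in\Delta^p_M$. Straight from Definition~\ref{def:hardfunc}, the function vanishes on $(-\infty,0]$, and on $[0,M]$ it has breakpoints $0<\a_1<\cdots<\a_p<M$ at which its values $h_\a(0)=0,\ h_\a(\a_1)=M,\ h_\a(\a_2)=0,\dots$ alternate between $0$ and $M$, with $h_\a(M)=M-h_\a(\a_p)\in\{0,M\}$. Consequently $h_\a$ restricted to $[0,M]$ consists of exactly $p+1$ affine pieces, and by continuity each is an affine \emph{bijection} of its subinterval onto $[0,M]$ (up from $0$ to $M$ or down from $M$ to $0$); beyond $M$ it is a single affine continuation. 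This is simultaneously the base case $k=1$ and the elementary fact that powers the induction.

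Next I would set up the induction. Write $g_j:=h_{\a^j}\circ\cdots\circ h_{\a^1}$, so that $g_k=H_{\a^1,\dots,\a^k}$, and formulate the hypothesis $P(j)$: (a) $g_j\equiv 0$ on $(-\infty,0]$; (b) there are points $0=t_0<t_1<\cdots<t_{N}=M$ with $N=(p+1)^j$ such that on each $[t_{i-1},t_i]$ the map $g_j$ is affine and a bijection onto $[0,M]$ (so $g_j(t_i)\in\{0,M\}$); and (c) $g_j$ is a single affine piece on $[M,+\infty)$. For the step $P(j)\Rightarrow P(j+1)$ with $g_{j+1}=h_{\a^{j+1}}\circ g_j$, the heart of the matter is (b): fix an interval $[t_{i-1},t_i]$ on which $g_j$ sweeps $[0,M]$ bijectively, and pull back the $p$ interior breakpoints $\a^{j+1}_1<\cdots<\a^{j+1}_p$ of $h_{\a^{j+1}}$ through this affine bijection. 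This subdivides $[t_{i-1},t_i]$ into $p+1$ subintervals, and on each the composite $g_{j+1}$ is affine (composition of affine maps) and maps onto $[0,M]$, because $g_j$ carries the subinterval onto one of the $p+1$ pieces of $h_{\a^{j+1}}$, which in turn maps bijectively onto $[0,M]$. Summing over the $N=(p+1)^j$ pieces yields $(p+1)^{j+1}$ pieces over $[0,M]$, each again a bijection onto $[0,M]$; property (a) persists since $g_j\equiv0$ there and $h_{\a^{j+1}}(0)=0$.

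The step I expect to be the main obstacle is not the conceptual multiplication but the honest bookkeeping of the tails, i.e. proving (c) so that no extra pieces accumulate beyond $[0,M]$. Here I would split into two subcases by whether the last active piece of $g_j$ ends at $M$ or at $0$: if it ends at $M$ (increasing), then $g_j\ge M$ on $[M,+\infty)$, where $h_{\a^{j+1}}$ is affine, so $g_{j+1}$ is a single affine piece there; if it ends at $0$ (decreasing), then $g_j\le 0$ on $[M,+\infty)$, where $h_{\a^{j+1}}\equiv0$, so $g_{j+1}\equiv0$ there. Either way (c) holds. Assembling the count gives at most $(p+1)^k$ pieces over $[0,M]$ together with the two tail pieces, hence at most $(p+1)^k+2$ in total, and the ``moreover'' claim is then immediate: by (b) each of the $(p+1)^k$ pieces on $[0,M]$ is a bijection onto $[0,M]$, so on each it attains minimum value $0$ and maximum value $M$.
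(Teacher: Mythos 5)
Your proof is correct and takes essentially the same approach as the paper: the paper's entire proof of this lemma is the single line ``Simple induction on $k$,'' and your argument supplies exactly the details of that induction (each of the $(p+1)^j$ pieces on $[0,M]$ is an affine bijection onto $[0,M]$, so precomposing one more sawtooth multiplies the count by $p+1$, cf.\ Figure~\ref{fig:sawtoothcomposition}). One small tightening: the side condition you invoke for the tails ($g_j \ge M$ or $g_j \le 0$ on $[M,+\infty)$, matching the endpoint value $g_j(M)\in\{0,M\}$) should formally be folded into your induction hypothesis (c) rather than asserted inside the step, but your two subcases already show it propagates, so the argument is complete.
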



\begin{proof} Simple induction on $k$.
\end{proof}

\begin{proof}[Proof of Theorem~\ref{thm:RELU_DNN_hard}] Given $k \geq 1$ and $w \geq 2$, choose any point $$ (\a^1, \ldots, \a^k) \in \bigcup_{M > 0}\underbrace{(\Delta^{w-1}_M \times \Delta^{w-1}_M\times \ldots \times \Delta^{w-1}_M)}_{k \textrm{ times}}.$$ 

By Definition~\ref{def:hardfunc}, each $h_{\a^i}$, $i=1, \ldots, k$ is a piecewise linear function with $w+1$ pieces and the leftmost piece having slope $0$. Thus, by Corollary~\ref{cor:tighter-bound-2-layer}, each $h_{\a^i}$, $i=1, \ldots, k$ can be represented by a 2-layer ReLU DNN with size $w$. Using Lemma~\ref{lem:comp-DNN}, $H_{\a^1, \ldots, \a^k}$ can be represented by a $k+1$ layer DNN with size $wk$; in fact, each hidden layer has exactly $w$ nodes.
\end{proof}

\begin{proof}[Proof of Theorem~\ref{thm:high-to-low}] Follows from Theorem~\ref{thm:RELU_DNN_hard} and Lemma~\ref{lem:size-bounds}.
\end{proof}

\begin{figure}[h]
\includegraphics[width=1\textwidth]{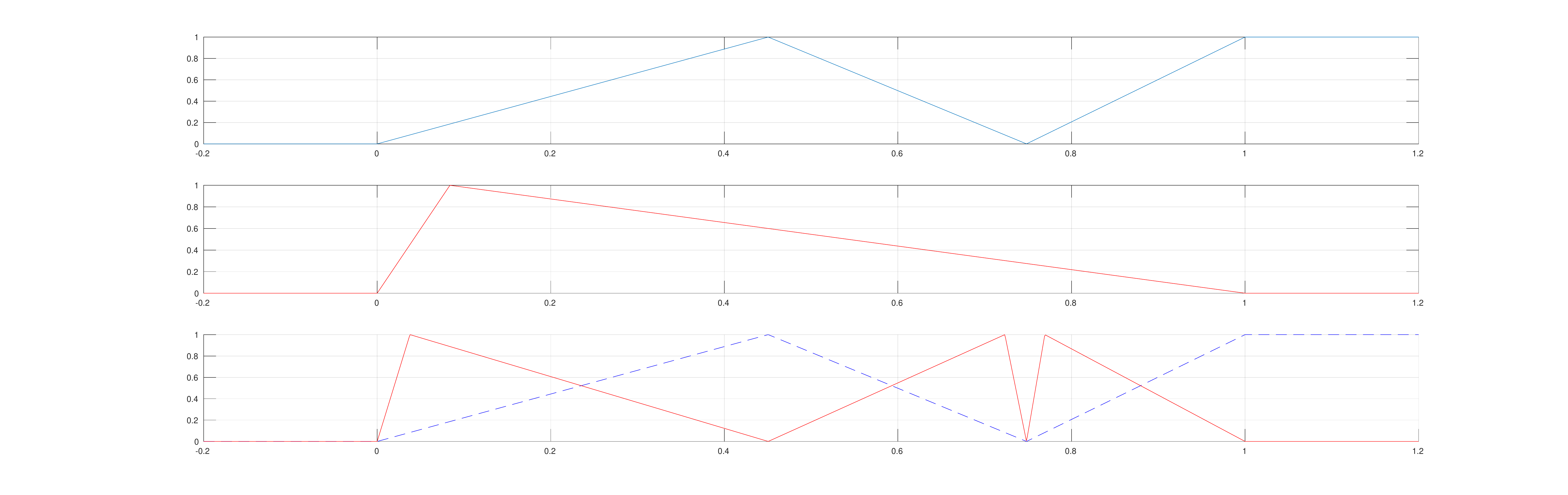}
\caption{Top: $h_{\a^1}$ with $\a^1\in \Delta_1^2$ with 3 pieces in the range $[0,1]$. Middle: $h_{\a^2}$ with $\a^2 \in \Delta_1^1$ with 2 pieces in the range $[0,1]$. Bottom: $H_{\a^1,\a^2}=h_{\a^2} \circ h_{\a^1}$ with $2\cdot 3 = 6$ pieces in the range $[0,1]$. The dotted line in the bottom panel corresponds to the function in the top panel. It shows that for every piece of the dotted graph, there is a full copy of the graph in the middle panel.}\label{fig:sawtoothcomposition}
\end{figure}




\begin{proof}[Proof of Theorem~\ref{thm:approximation}] Given $k \geq 1$ and $w\geq 2$ define $q:=w^k$ and $s_q := \underbrace{h_\a\circ h_\a \circ \ldots \circ h_\a}_{k \textrm{ times}}$ where $\a = (\frac1w, \frac2w, \ldots, \frac{w-1}{w}) \in \Delta_1^{q-1}$. 
Thus, $s_q$ is representable by a ReLU DNN of width $w+1$ and depth $k+1$ by Lemma~\ref{lem:comp-DNN}. In what follows, we want to give a lower bound on the $\ell^1$ distance of $s_q$ from any continuous $p$-piecewise linear comparator $g_p: \R \to \R$. The function $s_q$ contains $\lfloor \frac{q}{2} \rfloor$ triangles of width $\frac{2}{q}$ and unit height. A $p$-piecewise linear function has $p-1$ breakpoints in the interval $[0,1]$. So that in at least $\lfloor \frac{w^k}{2}\rfloor  - (p-1)$ triangles, $g_p$ has to be affine. In the following we demonstrate that inside any triangle of $s_q$, any affine function will incur an $\ell^1$ error of at least $\frac{1}{2w^k}$.
\begin{align*}
\int_{x=\frac{2i}{w^k}}^{\frac{2i+2}{w^k}}&{|s_q(x) - g_p(x)| dx} = \int_{x=0}^{\frac{2}{w^k}}{\left|s_q(x) - (y_1 + (x-0)\cdot \frac{y_2-y_1}{\frac{2}{w^k}-0}) \right| dx}\\
 &= \int_{x=0}^{\frac{1}{w^k}}{\left| xw^k - y_1 - \frac{w^k x}{2} (y_2-y_1) \right| dx}+\int_{x=\frac{1}{w^k}}^{\frac{2}{w^k}}{\left| 2-xw^k - y_1 - \frac{w^k x}{2} (y_2-y_1) \right| dx}\\
 &= \frac{1}{w^k}\int_{z=0}^{1}{\left| z - y_1 - \frac{z}{2} (y_2-y_1) \right| dz}+ \frac{1}{w^k}\int_{z=1}^{2}{\left| 2-z - y_1 - \frac{z}{2} (y_2-y_1) \right| dz}\\
 &= \frac{1}{w^k}\left(-3 + y_1 + \frac{2 y_1^2}{2 + y_1 - y_2} + y_2 + \frac{2 (-2 + y_1)^2}{2 - y_1 + y_2}\right)
\end{align*}
The above integral attains its minimum of $\frac{1}{2w^k}$ at $y_1=y_2=\frac{1}{2}$.
 Putting together, $$\| s_{w^k} - g_p \|_1 \geq \left(\lfloor \frac{w^k}{2} \rfloor - (p-1)\right)\cdot \frac{1}{2w^k} \geq \frac{w^k - 1 -2(p-1)}{4w^k} = \frac{1}{4} - \frac{2p - 1}{4w^k}$$ Thus, for any $\delta > 0$, $$p \leq \frac{w^k -4w^k \delta +1}{2} \implies 2p-1 \leq (\frac{1}{4} - \delta)4w^k \implies \frac{1}{4} - \frac{2p-1}{4w^k} \geq \delta \implies \| s_{w^k} - g_p \|_1 \geq \delta.$$ The result now follows from Lemma~\ref{lem:size-bounds}.\end{proof}

\subsection{A continuum of hard functions for $\R^n \to \R$ for $n\geq 2$} 

\begin{proof}[Proof of Lemma~\ref{lem:zonotope-DNN}] By Theorem~\ref{thm:zonotope-struct} part 3., $\gamma_{Z(\b^1, \ldots, \b^m)}(\rr) = |\langle \rr, \b^1\rangle| + \ldots + |\langle \rr, \b^m\rangle|$. It suffices to observe $$|\langle \rr, \b^1\rangle| + \ldots + |\langle \rr, \b^m\rangle| = \max\{\langle \rr, \b^1\rangle, -\langle \rr, \b^1\rangle\} + \ldots + \max\{\langle \rr, \b^m\rangle, -\langle \rr, \b^m\rangle\}.$$
\end{proof}

\begin{proof} [Proof of Proposition~\ref{prop:zonotope-main}] The fact that $\operatorname{ZONOTOPE}_{k,w,m}^{n}[\a^1, \ldots, \a^k, \b^1, \ldots, \b^m]$ can be represented by a $k+2$ layer ReLU DNN with size $2m + wk$ follows from Lemmas~\ref{lem:zonotope-DNN} and~\ref{lem:comp-DNN}. The number of pieces follows from the fact that $\gamma_{Z(\b^1, \ldots, \b^m)}$ has $\sum_{i=0}^{n-1}{m-1 \choose i}$ distinct linear pieces by parts 1. and 2. of Theorem~\ref{thm:zonotope-struct}, and $H_{\a^1, \ldots, \a^k}$ has $w^k$ pieces by Lemma~\ref{lem:construct-hard}.
\end{proof}

\begin{proof}[Proof of Theorem~\ref{thm:high-to-low-n}] Follows from Proposition~\ref{prop:zonotope-main}.
\end{proof}



\section{Exact Empirical Risk Minimization}\label{sec:erm}
\begin{proof}[Proof of Theorem~\ref{thm:optimal-training}] Let $\ell:\R\to \R$ be any convex loss function, and let $(x_1, y_1), \ldots, (x_D, y_D) \in \R^n\times \R$ be the given $D$ data points. As stated in~\eqref{eq:train}, the problem requires us to find an affine transformation $T_1:\R^n \to \R^w$ and a linear transformation $T_2:\R^w \to \R$, so as to minimize the empirical loss as stated in~\eqref{eq:train}. Note that $T_1$ is given by a matrix $A \in \R^{w\times n}$ and a vector $b\in \R^w$ so that $T(x) = Ax + b$ for all $x\in \R^n$. Similarly, $T_2$ can be represented by a vector $a' \in \R^w$ such that $T_2(y) = a'\cdot y$ for all $y\in \R^w$. If we denote the $i$-th row of the matrix $A$ by $a^i$, and write $b_i, a'_i$ to denote the $i$-th coordinates of the vectors $b,a'$ respectively, we can write the function represented by this network as $$f(x) = \sum_{i=1}^w a'_i\max\{0,a^i\cdot x + b_i\} = \sum_{i=1}^w \operatorname{sgn}(a'_i)\max\{0,(|a'_i|a^i)\cdot x + |a'_i|b_i\}.$$ In other words, the family of functions over which we are searching is of the form \begin{equation}\label{eq:form-train}f(x) =\sum_{i=1}^w s_i\max\{0,\tilde a^i\cdot x + \tilde b_i\}\end{equation}
where $\tilde a^i \in \R^n$, $b_i \in \R$ and $\s_i \in \{-1,+1\}$ for all $i=1, \ldots, w$. 
We now make the following observation. For a given data point $(x_j, y_j)$ if $\tilde a^i\cdot x_j + \tilde b_i \leq 0$, then the $i$-th term of~\eqref{eq:form-train} does not contribute to the loss function for this data point $(x_j,y_j)$. Thus, for every data point $(x_j, y_j)$, there exists a set $S_j \subseteq \{1, \ldots, w\}$ such that $f(x_j) = \sum_{i \in S_j}s_i (\tilde a^i\cdot x_j + \tilde b_i)$. In particular, if we are given the set $S_j$ for $(x_j,y_j)$, then the expression on the right hand side of~\eqref{eq:form-train} reduces to a linear function of $\tilde a^i, \tilde b_i$. For any fixed $i \in \{1, \ldots, w\}$, these sets $S_j$ induce a partition of the data set into two parts. In particular, we define $P^i_+ := \{j : i \in S_j\}$ and $P^i_- := \{1, \ldots, D\} \setminus P^i_+$. Observe now that this partition is also induced by the hyperplane given by $\tilde a^i, \tilde b_i$: $P^i_+ = \{j : \tilde a^i\cdot x_j + \tilde b_i > 0 \}$ and $P^i_+ = \{j : \tilde a^i\cdot x_j + \tilde b_i \leq 0 \}$. Our strategy will be to {\em guess} the partitions $P^i_+, P^i_-$ for each $i=1, \ldots, w$, and then do linear regression with the constraint that regression's decision variables $\tilde a^i, \tilde b_i$ induce the guessed partition.

More formally, the algorithm does the following. For each $i=1, \ldots, w$, the algorithm guesses a partition of the data set $(x_j,y_j)$, $j=1, \ldots, D$ by a hyperplane. Let us label the partitions as follows $(P^i_+, P^i_-)$, $i=1, \ldots, w$. So, for each $i=1, \ldots, w$, $P^i_+ \cup P^i_- = \{1, \ldots, D\}$, $P^i_+$ and $P^i_-$ are disjoint, and there exists a vector $c \in \R^n$ and a real number $\delta$ such that $P^i_- = \{j : c\cdot x_j + \delta \leq 0 \}$ and $P^i_+ = \{j : c\cdot x_j + \delta > 0 \}$. Further, for each $i=1,\ldots, w$ the algorithm selects a vector $s$ in $\{+1,-1\}^w$. 

For a fixed selection of partitions $(P^i_+, P^i_-)$, $i=1, \ldots, w$ and a vector $s$ in $\{+1,-1\}^w$, the algorithm solves the following convex optimization problem with decision variables $\tilde a^i \in \R^n$, $\tilde b_i \in\R$ for $i=1, \ldots, w$ (thus, we have a total of $(n+1)\cdot w$ decision variables). The feasible region of the optimization is given by the constraints
\begin{equation}\label{eq:constraints-1}
\begin{array}{c}
\tilde a^i\cdot x_j + \tilde b_i \leq 0\quad \forall j \in P^i_-\\
\tilde a^i\cdot x_j + \tilde b_i \geq 0\quad \forall j \in P^i_+
\end{array}
\end{equation}
which are imposed for all $i=1, \ldots, w$. Thus, we have a total of $D\cdot w$ constraints. Subject to these constraints we minimize the objective $\sum_{j=1}^D \sum_{i: j \in P^i_+} \ell(s_i(\tilde a^i\cdot x_j + \tilde b_i), y_j).$
Assuming the loss function $\ell$ is a convex function in the first argument, the above objective is a convex function. Thus, we have to minize a convex objective subject to the linear inequality constraints from~\eqref{eq:constraints-1}. 

We finally have to count how many possible partitions $(P^i_+,P^i_-)$ and vectors $s$ the algorithm has to search through. It is well-known~\cite{matousek2002lectures} that the total number of possible hyperplane partitions of a set of size $D$ in $\R^n$ is at most $2{D\choose n} \leq D^n$ whenever $n \geq 2$. Thus with a guess for each $i=1, \ldots, w$, we have a total of at most $D^{nw}$ partitions. There are $2^w$ vectors $s$ in $\{-1, +1\}^w$. This gives us a total of $2^wD^{nw}$ guesses for the partitions $(P^i_+,P^i_-)$ and vectors $s$. For each such guess, we have a convex optimization problem with $(n+1)\cdot w$ decision variables and $D\cdot w$ constraints, which can be solved in time $\poly(D,n,w)$. Putting everything together, we have the running time claimed in the statement.

The above argument holds only for $n\geq 2$, since we used the inequality $2{D\choose n} \leq D^n$ which only holds for $n\geq 2$. For $n=1$, a similar algorithm can be designed, but one which uses the characterization achieved in Theorem~\ref{thm:1-dim-2-layer}. 
Let $\ell:\R\to \R$ be any convex loss function, and let $(x_1, y_1), \ldots, (x_D, y_D) \in \R^2$ be the given $D$ data points. Using Theorem~\ref{thm:1-dim-2-layer}, to solve problem~\eqref{eq:train} it suffices to find a $\R \to \R$ piecewise linear function $f$ with $w$ pieces that minimizes the total loss. In other words, the optimization problem~\eqref{eq:train} is equivalent to the problem \begin{equation}\label{eq:transformed}\min\left\{\sum_{i=1}^D \ell(f(x_i), y_i): f \textrm{ is piecewise linear with }w\text{ pieces}\right\}.\end{equation}

We now use the observation that fitting piecewise linear functions to minimize loss is just a step away from linear regression, which is a special case where the function is contrained to have exactly one affine linear piece. Our algorithm will first guess the optimal partition of the data points such that all points in the same class of the partition correspond to the same affine piece of $f$, and then do linear regression in each class of the partition. Altenatively, one can think of this as guessing the interval $(x_i, x_{i+1})$ of data points where the $w-1$ breakpoints of the piecewise linear function will lie, and then doing linear regression between the breakpoints.

More formally, we parametrize piecewise linear functions with $w$ pieces by the $w$ slope-intercept values $(a_1, b_1), \ldots, (a_2, b_2), \ldots, (a_w,b_w)$ of the $w$ different pieces. This means that between breakpoints $j$ and $j+1$, $1 \leq j \leq w-2$, the function is given by $f(x) = a_{j+1}x + b_{j+1},$ and the first and last pieces are $a_1x+b_1$ and $a_wx +b_w$, respectively. 

Define $\mathcal{I}$ to be the set of all $(w-1)$-tuples $(i_1, \ldots, i_{w-1})$ of natural numbers such that $1 \leq i_1 \leq \ldots \leq i_{w-1} \leq D$. Given a fixed tuple $I =(i_1, \ldots, i_{w-1}) \in \mathcal{I}$, we wish to search through all piecewise linear functions whose breakpoints, in order, appear in the intervals $(x_{i_1}, x_{i_1+1}), (x_{i_2}, x_{i_2+1})$, $\ldots, (x_{i_{w-1}}, x_{i_{w-1}+1})$.  Define also $\mathcal{S} = \{-1,1\}^{w-1}$. Any $S\in \mathcal{S}$ will have the following interpretation: if $S_j = 1$ then $a_j \leq a_{j+1}$, and if $S_j = -1$ then $a_j \geq a_{j+1}$. Now for every $I \in \mathcal{I}$ and $S \in \mathcal{S}$, requiring a piecewise linear function that respects the conditions imposed by $I$ and $S$ is easily seen to be equivalent to imposing the following linear inequalities on the parameters $(a_1, b_1), \ldots, (a_2, b_2), \ldots, (a_w,b_w)$:

\begin{equation}\label{eq:constraints}
\begin{array}{r}
S_j(b_{j+1}-b_j - (a_{j} - a_{j+1})x_{i_j}) \geq 0\\
S_j(b_{j+1}-b_j - (a_{j} - a_{j+1})x_{i_{j}+1}) \leq 0\\ 
S_j(a_{j+1} - a_j) \geq 0  
\end{array}
\end{equation}
Let the set of piecewise linear functions whose breakpoints satisfy the above be denoted by $\pwl^1_{I,S}$ for $I \in \mathcal{I}, S \in \mathcal{S}$.

Given a particular $I \in \mathcal{I}$, we define $$\begin{array}{lc} D_1 := \{x_i: i \leq i_1\}, & \\ D_j := \{x_i: i_{j-1}< i \leq i_1\} & j=2, \ldots, w-1, \\ D_w := \{x_i: i > i_{w-1}\}& \end{array}.$$
Observe that \begin{equation}\label{eq:fixed-I}\min\{\sum_{i=1}^D \ell(f(x_i) - y_i): f\in \pwl^1_{I,S}\} = \min\{\sum_{j=1}^{w}\bigg(\sum_{i \in D_j} \ell(a_j\cdot x_i + b_j - y_i)\bigg): (a_j,b_j) \textrm{ satisfy~\eqref{eq:constraints}}\}\end{equation}

The right hand side of the above equation is the problem of minimizing a convex objective subject to linear constraints. Now, to solve~\eqref{eq:transformed}, we need to simply solve the problem~\eqref{eq:fixed-I} for all $I \in \mathcal{I}, S \in \mathcal{S}$ and pick the minimum. Since $|\mathcal{I}| = {D\choose w} = O(D^w)$ and $|\mathcal{S}| = 2^{w-1}$ we need to solve $O(2^w\cdot D^w)$ convex optimization problems, each taking time $O(\poly(D))$. Therefore, the total running time is $O((2D)^w\poly(D))$.






\end{proof}

\section{Auxiliary Lemmas}\label{sec:appendix}


Now we will collect some straightforward observations that will be used often. The following operations preserve the property of being representable by a ReLU DNN.

\begin{lemma}\label{lem:comp-DNN} [Function Composition] If $f_1 : \R^{d} \to \R^{m}$ is represented by a $d,m$ ReLU DNN with depth $k_1 + 1$ and size $s_1$, and $f_2 : \R^{m} \to \R^{n}$ is represented by an $m,n$ ReLU DNN with depth $k_2+1$ and size $s_2$, then $f_2 \circ f_1$ can be represented by a $d,n$ ReLU DNN with depth $k_1 + k_2 + 1$ and size $s_1 + s_2$.
\end{lemma}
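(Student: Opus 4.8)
The plan is to prove the function composition lemma by directly concatenating the two network architectures, using the fact that the output layer of the first network is a linear (not activated) transformation that can be absorbed into the input layer of the second.

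First I would write out the two networks explicitly using Definition~\ref{def:relu-dnn}. The network for $f_1$ consists of affine maps $T_1, \ldots, T_{k_1}$ and a final linear map $T_{k_1+1}$, so that $f_1 = T_{k_1+1}\circ\sigma\circ T_{k_1}\circ\cdots\circ\sigma\circ T_1$. Likewise $f_2 = U_{k_2+1}\circ\sigma\circ U_{k_2}\circ\cdots\circ\sigma\circ U_1$, where $U_1\in\cA_m^{\cdot}$ and $U_{k_2+1}$ is linear. The key observation is that in the composition $f_2\circ f_1$, the last map of $f_1$, namely the linear map $T_{k_1+1}:\R^{w_{k_1}}\to\R^m$, meets the first map of $f_2$, namely the affine map $U_1:\R^m\to\R^{\cdot}$, with no ReLU activation in between. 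Since the composition of a linear map followed by an affine map is again an affine map, I would define $\tilde U_1 := U_1\circ T_{k_1+1}$, which by the standard fact that $\cL$ composed into $\cA$ lands in $\cA$ is an affine transformation of the correct dimensions.

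Next I would exhibit the explicit network for $f_2\circ f_1$: its sequence of transformations is $T_1, \ldots, T_{k_1}$ (affine), then $\tilde U_1$ (affine), then $U_2, \ldots, U_{k_2}$ (affine), and finally $U_{k_2+1}$ (linear), with a $\sigma$ inserted between every consecutive pair exactly as required by~\eqref{eq:DNN-def}. A direct check shows that $U_{k_2+1}\circ\sigma\circ\cdots\circ\sigma\circ\tilde U_1\circ\sigma\circ\cdots\circ\sigma\circ T_1 = f_2\circ f_1$, because merging $T_{k_1+1}$ into $U_1$ does not change the composed function. I would then count: the hidden layers of $f_1$ contribute $k_1$ layers, and those of $f_2$ contribute $k_2$ layers, and fusing $T_{k_1+1}$ with $U_1$ removes no hidden node but eliminates the boundary between the two networks, yielding $k_1+k_2$ hidden layers total, hence depth $k_1+k_2+1$. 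The widths of these hidden layers are exactly the original hidden-layer widths of the two networks, so the size is the sum $s_1+s_2$.

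The only subtle point — and the step I would treat most carefully — is verifying that fusing the terminal linear map of the first network with the initial affine map of the second is legitimate and does not introduce an extra ReLU or an extra counted layer. This hinges precisely on the convention in Definition~\ref{def:relu-dnn} that the topmost transformation $T_{k_1+1}$ is \emph{linear} and applied \emph{without} a subsequent activation, so that no nonlinearity is lost when it is absorbed into $U_1$; if instead there were an activation after $f_1$'s output, the two networks could not be merged for free. I do not expect any real obstacle beyond bookkeeping, since the construction is purely syntactic once this observation is in place.
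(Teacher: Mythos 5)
Your proposal is correct and is exactly the paper's argument: the paper's one-line proof ("Follows from~\eqref{eq:DNN-def} and the fact that a composition of affine transformations is another affine transformation") is precisely your fusion of the terminal linear map $T_{k_1+1}$ with the initial affine map $U_1$ into a single affine transformation $\tilde U_1$, which you have simply spelled out with the layer and size bookkeeping made explicit. You also correctly identify the one subtlety — that the output map carries no activation, so the merge is free — which is implicit in the paper's appeal to Definition~\ref{def:relu-dnn}.
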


\begin{proof} Follows from~\eqref{eq:DNN-def} and  the fact that a composition of affine transformations is another affine transformation.\end{proof}

\begin{lemma}\label{lem:add-DNN} [Function Addition] If $f_1 : \R^{n} \to \R^{m}$ is represented by a $n,m$ ReLU DNN with depth $k + 1$ and size $s_1$, and $f_2 : \R^{n} \to \R^{m}$ is represented by a $n,m$ ReLU DNN with depth $k+1$ and size $s_2$, then $f_1 + f_2$ can be represented by a $n,m$ ReLU DNN with depth $k + 1$ and size $s_1 + s_2$.
\end{lemma}

\begin{proof} 
We simply put the two ReLU DNNs in parallel and combine the appropriate coordinates of the outputs.
\end{proof}

\begin{lemma}\label{lem:max-DNN} [Taking maximums/minimums] Let $f_1, \ldots, f_m : \R^{n} \to \R$ be functions that can each be represented by $\R^n \to \R$ ReLU DNNs with depths $k_i + 1$ and size $s_i$, $i=1, \ldots, m$.  Then the function $f:\R^n \to \R$ defined as $f(\x) := \max\{f_1(\x), \ldots, f_m(\x)\}$ can be represented by a ReLU DNN of depth at most $\max\{k_1, \ldots, k_m\} + \log(m) + 1$ and size at most $s_1 + \ldots s_m + 4(2m-1)$. Similarly, the function $g(\x) := \min\{f_1(\x), \ldots, f_m(\x)\}$ can be represented by a ReLU DNN of depth at most $\max\{k_1, \ldots, k_m\} + \lceil\log(m)\rceil + 1$ and size at most $s_1 + \ldots s_m + 4(2m-1)$.
\end{lemma}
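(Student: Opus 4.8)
The plan is to construct the $m$-ary maximum from the two-input maximum, which is the atomic ReLU gadget. Using $\max\{x,y\} = \tfrac{x+y}{2} + \tfrac{|x-y|}{2}$ together with the scalar identities $|z| = \sigma(z)+\sigma(-z)$ and $z = \sigma(z)-\sigma(-z)$, the map $(x,y)\mapsto\max\{x,y\}$ is computed by a $2$-layer ReLU DNN whose single hidden layer holds the four nodes $\sigma(x+y),\sigma(-x-y),\sigma(x-y),\sigma(y-x)$ and whose (linear) output layer forms $\tfrac12(\sigma(x+y)-\sigma(-x-y)) + \tfrac12(\sigma(x-y)+\sigma(y-x))$. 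Hence a binary max gadget has one hidden layer and size $4$; the same count holds for $\min\{x,y\} = \tfrac{x+y}{2}-\tfrac{|x-y|}{2}$.

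First I would evaluate all of $f_1,\ldots,f_m$ in parallel, placing the $m$ networks side by side and concatenating their scalar outputs exactly as in the parallel-composition argument behind Lemma~\ref{lem:add-DNN}, at a cost of $s_1+\cdots+s_m$ nodes. I would then fold these $m$ outputs into a single value through a balanced binary tree of the two-input max gadgets above. The tree has $\lceil\log_2 m\rceil$ levels and $m-1$ internal gadgets; since each level composes with the layer beneath it, Lemma~\ref{lem:comp-DNN} shows the tree adds $\lceil\log_2 m\rceil$ hidden layers on top of whatever layers are used to compute the $f_i$, and contributes $4(m-1)$ nodes from its gadgets.

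The main obstacle is depth synchronization and the associated size accounting. Because the branches carry differing numbers of hidden layers $k_i$, and because the two inputs of any max gadget must be produced at a common layer, I would pad each branch—both a shallow $f_i$ and any intermediate tree value waiting for its sibling—with identity pass-through layers, each scalar identity $z\mapsto\sigma(z)-\sigma(-z)$ costing two nodes per layer. Writing $K:=\max_j k_j$, this aligns every $f_i$ to $K$ hidden layers before the tree begins, so the full network has $K+\lceil\log_2 m\rceil$ hidden layers, i.e.\ depth $\max\{k_1,\ldots,k_m\}+\lceil\log_2 m\rceil+1$, matching the stated depth bound. The delicate point is bounding the padding: charging the identity pairs along each branch together with the gadget nodes to the $2m-1$ nodes of the tree (the $m$ leaves and $m-1$ internal nodes), one absorbs everything into the budget $4(2m-1)$, for total size at most $s_1+\cdots+s_m+4(2m-1)$. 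I expect verifying that this charging indeed stays within $4(2m-1)$ to be the part requiring the most care.

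Finally, the minimum follows with identical bounds, either by running the analogous $\min$ gadget throughout, or more cleanly via $\min\{f_1,\ldots,f_m\} = -\max\{-f_1,\ldots,-f_m\}$, where each $-f_i$ is represented by negating the output linear map of the DNN for $f_i$ at no change in depth or size.
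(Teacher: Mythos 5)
Your construction is essentially the paper's own proof. The paper argues by induction on $m$: it splits the family into halves of sizes $\lfloor m/2\rfloor$ and $\lceil m/2\rceil$, puts the two recursive networks in parallel (Lemma~\ref{lem:add-DNN}), and composes with the same $4$-node two-input gadget $T(x,y)=\frac{x+y}{2}+\frac{|x-y|}{2}$ built from $\sigma(x+y),\sigma(-x-y),\sigma(x-y),\sigma(y-x)$ (Lemma~\ref{lem:comp-DNN}). Unrolled, that induction is exactly your balanced binary tree of $m-1$ gadgets costing $4(m-1)$ nodes on top of the $s_1+\cdots+s_m$ nodes for the parallel evaluation of the $f_i$, and your $\min=-\max\{-f_1,\ldots,-f_m\}$ reduction matches the paper's intent for the second claim.

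The one point where you go beyond the paper is depth synchronization, and your instinct that this is the delicate step is correct --- more correct than you may realize. Within the tree the charging works: in the floor/ceil split the two subtrees' depths differ by at most one level, so each of the $m-1$ gadgets needs at most one identity layer ($2$ nodes, via $z\mapsto\sigma(z)-\sigma(-z)$) on one input, giving at most $2(m-1)$ padding nodes, and $4(m-1)+2(m-1)\le 4(2m-1)$ as required. But the initial alignment of the leaves \emph{cannot} be charged to $4(2m-1)$: padding $f_i$ from $k_i$ up to $K=\max_j k_j$ hidden layers costs $2(K-k_i)$ nodes on that branch, a quantity depending on the $k_i$ and not on $m$; already for $m=2$ with $k_1\gg k_2$ this exceeds the budget $4(2m-1)=12$. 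So your proposed charging scheme fails for genuinely heterogeneous depths. You should know, however, that the paper's proof has the identical blind spot: it parallelizes $g_1$ and $g_2$ even though Lemma~\ref{lem:add-DNN} presupposes equal depths, and it never accounts for padding nodes at all. In the situations where the lemma is actually invoked (all $k_i$ equal, e.g.\ the affine pieces in the proof of Theorem~\ref{cor:all-pwl-are-dnn}), both your argument and the paper's are sound; for unequal $k_i$ the size bound should honestly carry an extra $2\sum_i(\max_j k_j-k_i)$ term, in your write-up and in the paper alike.
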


\begin{proof} We prove this by induction on $m$. The base case $m=1$ is trivial. For $m\geq 2$, consider $g_1 := \max\{f_1, \ldots, f_{\lfloor \frac{m}{2}\rfloor}\}$ and $g_2 := \max\{f_{\lfloor \frac{m}{2} \rfloor+1}, \ldots, f_m\}$. By the induction hypothesis (since $\lfloor \frac{m}{2}\rfloor, \lceil \frac{m}{2}\rceil < m$ when $m \geq 2$),  $g_1$ and $g_2$ can be  represented by  ReLU DNNs of depths at most $\max\{k_1, \ldots, k_{\lfloor \frac{m}{2}\rfloor}\} + \lceil\log(\lfloor \frac{m}{2}\rfloor)\rceil + 1$ and $\max\{k_{\lfloor \frac{m}{2} \rfloor+1}, \ldots, k_m\} + \lceil\log(\lceil \frac{m}{2}\rceil)\rceil + 1$ respectively, and sizes at most $s_1 + \ldots s_{\lfloor \frac{m}{2}\rfloor} + 4(2\lfloor \frac{m}{2}\rfloor - 1)$ and  $s_{\lfloor \frac{m}{2} \rfloor+1} + \ldots + s_m + 4(2\lfloor \frac{m}{2}\rfloor-1)$, respectively.  Therefore, the function $G: \R^n\to \R^2$ given by  $G(\x) = (g_1(\x), g_2(\x))$ can be implemented by a ReLU DNN with depth at most $\max\{k_1, \ldots, k_m\} + \lceil\log(\lceil \frac{m}{2}\rceil)\rceil + 1$ and size at most $s_1 + \ldots + s_m + 4(2m-2)$.

We now show how to represent the function $T:\R^2\to \R$ defined as $T(x,y) = \max\{x, y\} = \frac{x+y}{2} + \frac{|x-y|}{2}$ by a 2-layer ReLU DNN with size 4 -- see Figure~\ref{fig:max-comp}. The result now follows from the fact that $f = T\circ G$ and Lemma~\ref{lem:comp-DNN}.
\end{proof}

\begin{figure}[h!]
\centering
\includegraphics[width=.7\textwidth]{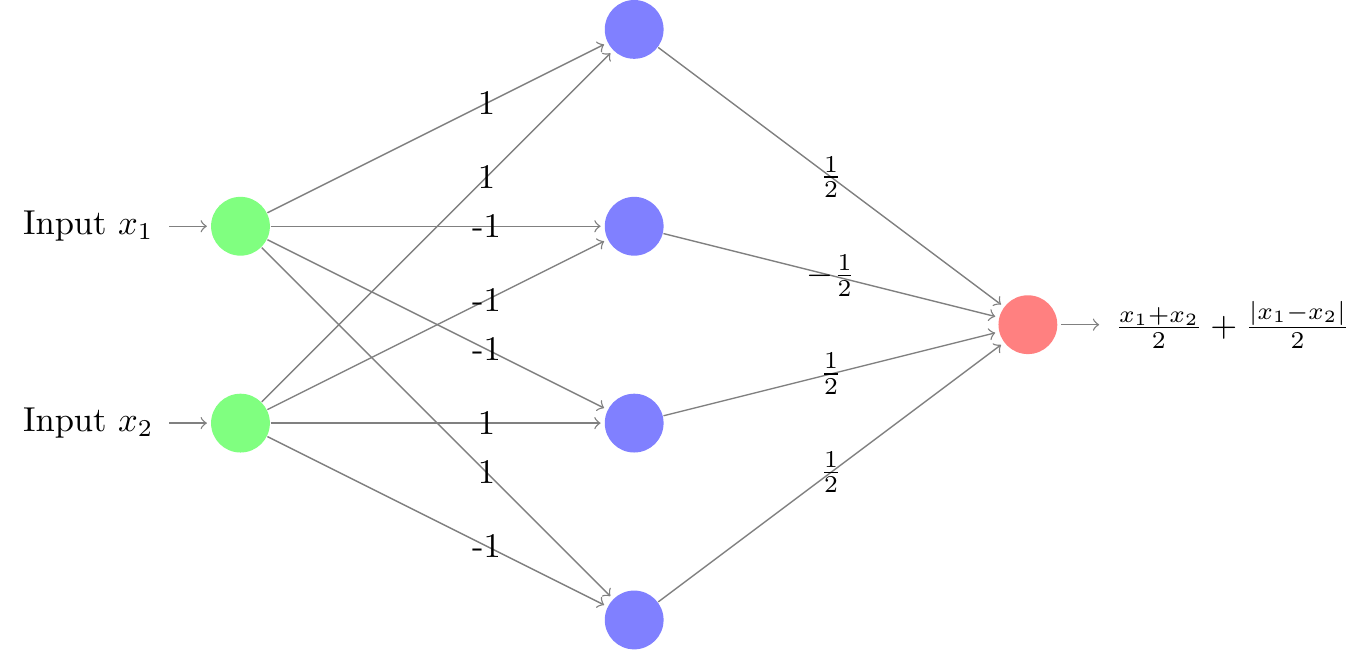}
\caption{A 2-layer ReLU DNN computing $\max \{ x_1,x_2 \} = \frac{x_1+x_2}{2}+\frac{|x_1-x_2|}{2}$}\label{fig:max-comp} \end{figure}
\vspace{-3 mm}
\begin{lemma}\label{lem:affine-DNN} Any affine transformation $T:\R^n \to \R^m$ is representable by a 2-layer ReLU DNN of size $2m$.
\end{lemma}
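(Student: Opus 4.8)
The plan is to exhibit an explicit 2-layer ReLU DNN (one hidden layer of width $2m$) that computes the given affine map. Write $T(\x) = W\x + \c$ for some matrix $W \in \R^{m\times n}$ and vector $\c \in \R^m$, and denote by $\ell_i(\x) := (W\x + \c)_i$ the $i$-th affine coordinate function, for $i = 1,\ldots,m$. The single algebraic fact driving the construction is the identity $t = \sigma(t) - \sigma(-t) = \max\{0,t\} - \max\{0,-t\}$, valid for every $t \in \R$, which lets us recover any real quantity as a signed combination of two ReLU outputs.

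First I would define the affine first layer $T_1 : \R^n \to \R^{2m}$ so that its $2i-1$ and $2i$ output coordinates are $\ell_i(\x)$ and $-\ell_i(\x)$ respectively; this is an affine transformation into $\R^{2m}$, so the hidden layer has width $2m$. Applying $\sigma$ entrywise produces the $2m$ values $\sigma(\ell_i(\x))$ and $\sigma(-\ell_i(\x))$. Then I would take the second layer $T_2 : \R^{2m} \to \R^m$ to be the \emph{linear} map whose $i$-th output is $\sigma(\ell_i(\x)) - \sigma(-\ell_i(\x))$, i.e. the block-diagonal matrix pairing each $(\sigma(\ell_i),\sigma(-\ell_i))$ with coefficients $(+1,-1)$. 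By the identity above, the $i$-th output equals $\ell_i(\x) = (W\x + \c)_i$, so $T_2 \circ \sigma \circ T_1 = T$ as required.

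It then remains only to read off the parameters against Definition~\ref{def:relu-dnn}: this is a ReLU DNN with $k=1$ hidden layer (depth $2$), hidden width $w_1 = 2m$, and hence size $2m$. Crucially, the definition permits $T_1$ to be affine but requires the final map $T_{2}$ to be \emph{linear}; the only point worth checking is that this constraint is respected, which it is precisely because the constant term $\c$ has been absorbed into the affine first layer $T_1$, leaving $T_2$ as a pure sign-combination of hidden units with no bias.

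There is no substantial obstacle here: the lemma is essentially the observation that the identity map on $\R$ is itself a 2-layer ReLU DNN of size $2$ (via $t = \sigma(t)-\sigma(-t)$), applied coordinatewise and precomposed with the affine map $\x \mapsto W\x + \c$. The only mild care needed is the bookkeeping ensuring the bias lives in $T_1$ so that $T_2 \in \cL^{m}_{2m}$, matching the requirement that the output transformation be linear rather than merely affine.
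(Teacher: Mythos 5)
Your proof is correct and is essentially the paper's own argument: the paper writes $T = (I\circ\sigma\circ T) + (-I\circ\sigma\circ(-T))$ and invokes Lemma~\ref{lem:add-DNN} to put the two size-$m$ networks in parallel, which is exactly your explicit hidden layer of width $2m$ computing $\sigma(\ell_i)$ and $\sigma(-\ell_i)$ combined with $(+1,-1)$ coefficients. Your extra bookkeeping---checking that the bias lives in $T_1$ so that the output map is linear as Definition~\ref{def:relu-dnn} requires---is a welcome elaboration but not a different route.
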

\begin{proof} Simply use the fact that $T = (I\circ\sigma\circ T) + (-I\circ\sigma\circ (-T))$, and the right hand side can be represented by a 2-layer ReLU DNN of size $2m$ using Lemma~\ref{lem:add-DNN}.
\end{proof}

\begin{lemma}\label{lem:size-bounds-0} Let $f:\R \to \R$ be a function represented by a $\R \to \R$ ReLU DNN with depth $k+1$ and widths $w_1, \ldots, w_k$ of the $k$ hidden layers. Then $f$ is a PWL function with at most  $2^{k-1}\cdot(w_1+1)\cdot w_2 \cdot \ldots \cdot w_k$ pieces.
\end{lemma}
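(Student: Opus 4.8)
The plan is to induct on the number of hidden layers, tracking not the scalar output but the \emph{vector-valued} function produced at each hidden layer, since the final output $f = T_{k+1}\circ g_k$ is obtained from the last hidden layer by a linear map and linear maps never increase the number of affine pieces. Concretely, for $j = 0, 1, \ldots, k$ I would let $g_j : \R \to \R^{w_j}$ denote the output of the $j$-th hidden layer, so that $g_0(x) = x$ (with $w_0 = 1$), $g_j = \sigma\circ T_j \circ g_{j-1}$, and $f = T_{k+1}\circ g_k$. I will call a maximal interval on which \emph{every} coordinate of $g_j$ is simultaneously affine a \emph{piece} of $g_j$, and let $N_j$ be the number of such pieces.

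First I would establish the base case $j=1$. Here $g_1 = \sigma\circ T_1$ with $T_1(x) = \a x + \b \in \R^{w_1}$, so each of the $w_1$ coordinates is $\max\{0, \a_i x + \b_i\}$, a function with at most one breakpoint; all these breakpoints together split $\R$ into at most $w_1 + 1$ intervals, giving $N_1 \le w_1 + 1$.

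For the inductive step I would argue as follows. Assume $g_{j-1}$ has at most $N_{j-1}$ pieces. On each such piece $I$ the map $g_{j-1}$ is affine, hence $T_j \circ g_{j-1}$ is affine on $I$ as a composition of affine maps; applying $\sigma$ coordinatewise, each of the $w_j$ coordinates is an affine function of $x$ on $I$ and its rectification introduces at most one breakpoint in the interior of $I$ (where that affine coordinate crosses zero). Thus $I$ is subdivided into at most $w_j + 1$ pieces of $g_j$, and summing over the at most $N_{j-1}$ pieces of $g_{j-1}$ yields the recursion $N_j \le N_{j-1}\,(w_j + 1)$. Unrolling from the base case gives $N_k \le \prod_{i=1}^{k}(w_i + 1)$, and since $f = T_{k+1}\circ g_k$ is affine on each piece of $g_k$, the scalar function $f$ has at most $N_k$ pieces. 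To reach the stated form I would finally use $w_i \ge 1$, so that $w_i + 1 \le 2 w_i$ for each $i \ge 2$, giving
\[
\prod_{i=1}^{k}(w_i + 1) \;\le\; (w_1 + 1)\prod_{i=2}^{k} 2 w_i \;=\; 2^{k-1}(w_1 + 1)\, w_2 \cdots w_k .
\]

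The routine parts are the affine-composition observations; the one step that needs genuine care is the inductive count, namely justifying that rectifying $w_j$ affine coordinates over a single piece adds at most $w_j$ new breakpoints (each affine coordinate crosses zero at most once inside the piece) and that passing through the final linear layer can only merge adjacent pieces, never create new ones. This is where I expect the bulk of the care to go, though no hard estimate is involved: the main obstacle is bookkeeping the vector-valued notion of ``piece'' consistently through the composition rather than any analytic difficulty. I note in passing that the argument actually yields the sharper bound $\prod_{i=1}^k (w_i+1)$, of which the stated inequality is the convenient relaxation used in the sequel.
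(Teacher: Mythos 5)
Your proof is correct, but it counts pieces differently from the paper, and the difference is worth noting. The paper inducts on depth from a purely scalar viewpoint: the input to each node of the last hidden layer is itself the output of a depth-$k$ net, hence a PWL function with at most $2^{k-1}(w_1+1)w_2\cdots w_k$ pieces by the inductive hypothesis; applying the ReLU can at most \emph{double} the number of pieces of an arbitrary PWL function (each affine piece may be cut once by the $x$-axis), and the final affine combination of $w_{k+1}$ such functions multiplies the count by $w_{k+1}$, giving the recursion $N_{k+1}\le 2\,w_{k+1}N_k$ --- this is where the paper's factor $2^{k-1}$ originates. You instead track the \emph{vector-valued} layer map and its common refinement: on a maximal interval where all coordinates of $g_{j-1}$ are simultaneously affine, every pre-activation of layer $j$ is affine and crosses zero at most once, so rectification adds at most $w_j$ breakpoints per piece, yielding $N_j\le N_{j-1}(w_j+1)$ and hence the sharper bound $\prod_{i=1}^k(w_i+1)$, from which the stated bound follows by the relaxation $w_i+1\le 2w_i$ for $i\ge 2$. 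The gain comes from exploiting that all neurons in a layer share the same piece structure inherited from the previous layer (so within a piece each neuron contributes at most one breakpoint), whereas the paper treats each node's input as an independent PWL function whose pieces ReLU may double and whose breakpoint sets are then unioned; your accounting of the final linear layer (merging but never creating pieces) and the degenerate cases (a pre-activation identically zero or of constant sign on a piece contributes no breakpoint) are handled correctly, so both arguments are valid, with yours delivering a strictly stronger intermediate estimate whenever some $w_i\ge 2$ for $i\ge 2$.
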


\begin{proof}
\begin{figure}[h!]
\includegraphics[width=1\textwidth]{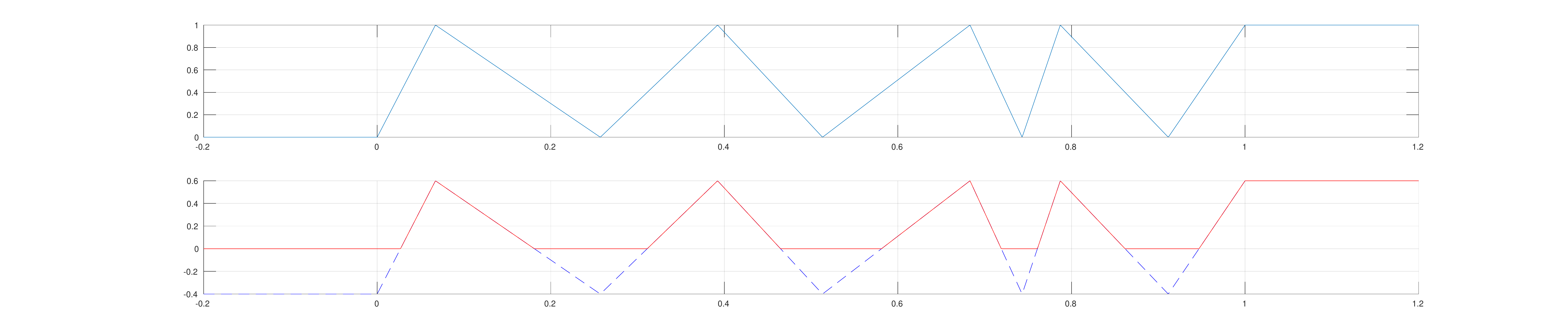}
\caption{The number of pieces increasing after activation. If the blue function is $f$, then the red function $g = \max\{0,f+b\}$ has at most twice the number of pieces as $f$ for any bias $b\in \R$.}\label{fig:maxpieces} 
\end{figure}
We prove this by induction on $k$. The base case is $k=1$, i.e, we have a 2-layer ReLU DNN. Since every activation node can produce at most one breakpoint in the piecewise linear function, we can get at most $w_1$ breakpoints, i.e., $w_1 + 1$ pieces.

Now for the induction step, assume that for some $k\geq 1$, any $\R \to \R$ ReLU DNN with depth $k+1$ and widths $w_1, \ldots, w_k$ of the $k$ hidden layers produces at most  $2^{k-1}\cdot(w_1+1)\cdot w_2 \cdot \ldots \cdot w_k$ pieces. 

Consider any $\R \to \R$ ReLU DNN with depth $k+2$ and widths $w_1, \ldots, w_{k+1}$ of the $k+1$ hidden layers. Observe that the input to any node in the last layer is the output of a $\R \to \R$ ReLU DNN with depth $k+1$ and widths $w_1, \ldots, w_k$. By the induction hypothesis, the input to this node in the last layer is a piecewise linear function $f$ with at most $2^{k-1}\cdot(w_1+1)\cdot w_2 \cdot \ldots \cdot w_k$ pieces. When we apply the activation, the new function $g(x) = \max\{0,f(x)\}$, which is the output of this node, may have at most twice the number of pieces as $f$, because each original piece may be intersected by the $x$-axis; see Figure~\ref{fig:maxpieces}. Thus, after going through the layer, we take an affine combination of $w_{k+1}$ functions, each with at most $2\cdot(2^{k-1}\cdot(w_1+1)\cdot w_2 \cdot \ldots \cdot w_k)$ pieces. In all, we can therefore get at most $2\cdot(2^{k-1}\cdot(w_1+1)\cdot w_2 \cdot \ldots \cdot w_k)\cdot w_{k+1}$ pieces, which is equal to $2^{k}\cdot(w_1+1)\cdot w_2 \cdot \ldots \cdot w_k\cdot w_{k+1},$ and the induction step is completed.
\end{proof}

Lemma~\ref{lem:size-bounds-0} has the following consequence about the depth and size tradeoffs for expressing functions with agiven number of pieces.

\begin{lemma}\label{lem:size-bounds} Let $f:\R\to \R$ be a piecewise linear function with $p$ pieces. If $f$ is represented by a ReLU DNN with depth $k+1$, then it must have size at least $\frac12kp^{1/k} - 1$. Conversely, any piecewise linear function $f$ that represented by a ReLU DNN of depth $k+1$ and size at most $s$, can have at most $(\frac{2s}{k})^{k}$ pieces.
\end{lemma}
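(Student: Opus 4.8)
The plan is to derive both statements directly from the piece-count bound of Lemma~\ref{lem:size-bounds-0} together with the AM--GM inequality, so that essentially no new idea is needed beyond careful bookkeeping. Suppose $f$ is represented by a ReLU DNN of depth $k+1$ with hidden-layer widths $w_1, \ldots, w_k$, and write $s = w_1 + \cdots + w_k$ for its size. Lemma~\ref{lem:size-bounds-0} guarantees that $f$ has at most $2^{k-1}(w_1+1)w_2\cdots w_k$ pieces. The first thing I would do is absorb the awkward $(w_1+1)$ factor: since every width is a positive integer we have $w_1 + 1 \leq 2w_1$, hence
$$2^{k-1}(w_1+1)w_2\cdots w_k \leq 2^{k-1}\cdot 2w_1\cdot w_2\cdots w_k = 2^k\, w_1 w_2\cdots w_k.$$

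Next I would apply AM--GM to the $k$ positive reals $w_1, \ldots, w_k$, whose arithmetic mean is $s/k$, to get $w_1 w_2 \cdots w_k \leq (s/k)^k$ (and, more generally, $\leq (s/k)^k$ whenever $w_1+\cdots+w_k\le s$). Combined with the previous display this shows that any $f$ computed by a depth-$(k+1)$ network of size at most $s$ has at most $2^k (s/k)^k = (2s/k)^k$ pieces, which is precisely the converse (second) assertion of the lemma.

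For the first assertion I would simply invert this inequality. If $f$ has exactly $p$ pieces and is realized by some depth-$(k+1)$ network of size $s$, then the bound above gives $p \leq (2s/k)^k$ for that realization; taking $k$-th roots yields $p^{1/k} \leq 2s/k$, i.e. $s \geq \tfrac12 k\, p^{1/k}$, which is in fact slightly stronger than the claimed $\tfrac12 k\, p^{1/k} - 1$ and therefore implies it.

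There is no real obstacle here: the content is entirely contained in Lemma~\ref{lem:size-bounds-0}, and the remaining work is a one-line application of AM--GM. The only point demanding a little attention is the interaction of the leading constant $2^{k-1}$ with the $(w_1+1)$ term, where one must use $w_1+1\le 2w_1$ (valid because $w_1\ge 1$) to convert $2^{k-1}(w_1+1)$ into a clean $2^k w_1$; it is exactly this estimate that makes the derived bound beat the stated one by the harmless additive constant, so one could alternatively keep the $(w_1+1)$ term and apply AM--GM to $2(w_1+1),2w_2,\ldots,2w_k$ to recover the $-1$ form verbatim.
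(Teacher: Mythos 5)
Your proposal is correct and takes essentially the same route as the paper: both arguments consist of the piece-count bound of Lemma~\ref{lem:size-bounds-0} followed by a single application of AM--GM. The only difference is bookkeeping --- you absorb $w_1+1\le 2w_1$, prove the converse direction first, and then invert it, which in fact yields the marginally stronger lower bound $\tfrac12 k\,p^{1/k}$ (without the $-1$), whereas the paper applies AM--GM to $(w_1+1),w_2,\ldots,w_k$ directly and picks up the additive $-1$ because the size equals $(w_1+1)+w_2+\cdots+w_k-1$.
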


\begin{proof}
Let widths of the $k$ hidden layers be $w_1, \ldots, w_k$. By Lemma~\ref{lem:size-bounds-0}, we must have \begin{equation}\label{eq:piece-bound}2^{k-1}\cdot(w_1+1)\cdot w_2 \cdot \ldots \cdot w_k \geq p.\end{equation}
By the AM-GM inequality, minimizing the size $w_1 + w_2 + \ldots + w_k$ subject to \eqref{eq:piece-bound}, means setting $w_1 + 1 = w_2 = \ldots = w_k$. This implies that $w_1 + 1 = w_2 = \ldots = w_k \geq \frac12p^{1/k}$. The first statement follows. The second statement follows using the AM-GM inequality again, this time with a restriction on $w_1+ w_2 + \ldots + w_k$.
\end{proof}

\end{document}